\newcommand\blfootnote[1]{%
  \begingroup
  \renewcommand\thefootnote{}\footnote{#1}%
  \addtocounter{footnote}{-1}%
  \endgroup
}
\def\eps{\varepsilon}
\setlist[enumerate,1]{leftmargin=1.5em}
\title{On the Inconsistency of Kernel Ridgeless Regression in Fixed Dimensions}
\author{Daniel Beaglehole$^*$ 
Mikhail Belkin$^{\dagger,^*}$
Parthe Pandit$^\dagger$}
\def\X{X}
\begin{document}
\blfootnote{$^*$Computer Science and Engineering, University of California, San Diego}
\blfootnote{$^\dagger$\Halicioglu~Data Science Institute,  University of California, San Diego}
\maketitle
\begin{abstract}
``\textit{Benign overfitting}'', the ability of certain algorithms to interpolate noisy training data and yet perform well out-of-sample, has been a topic of considerable recent interest. We show, using a fixed design setup, that an important class of predictors, kernel machines with translation-invariant kernels, does not exhibit benign overfitting in fixed dimensions. In particular, the estimated predictor does not converge to the ground truth with increasing sample size, for any non-zero regression function and any (even adaptive) bandwidth selection. To prove these results, we give exact expressions for the generalization error, {and its decomposition in terms of an approximation error and an estimation error that elicits a trade-off based on the selection of the kernel bandwidth.} 
Our results apply to commonly used translation-invariant kernels such as Gaussian, Laplace, and Cauchy.
\end{abstract}

                            \section{Introduction}
Recent empirical evidence has shown that certain algorithms, contrary to classical learning theory, can interpolate noisy data, \ie, achieve zero training error, while still generalizing well out-of-sample, that is, exhibiting low test error \cite{zhang2021understanding, belkin2019reconciling,neyshabur2014search}.
This phenomenon of ``benign overfitting'' (using the terminology of~\cite{bartlett2020benign}) has been rigorously analyzed  for certain parametric methods such as linear regression, and random feature regression \cite{bartlett2020benign, hastie2022surprises,muthukumar2020harmless,belkin2020two}, as well as non-parametric methods such as  kernel regression with singular kernels~\cite{devroye1998hilbert,belkin2018overfitting,belkin2019does}. 

Many theoretical results in this direction assume a high-dimensional regime where the data dimension $d$ grows with the sample size $n$. However, it remains unclear whether this phenomenon is common when the data dimension is fixed. In particular, it has been an open question whether popular practical algorithms, such as kernel machines~\cite{scholkopf2018learning, gyorfi2002distribution},  exhibit benign overfitting. 

Indeed, the work of~\cite{liang2020just} showed that interpolating kernel machines, also known as kernel ridgeless regression, can be consistent in high dimension, i.e., can converge to an optimal predictor given enough data. On the other hand, the work of Rakhlin and Zhai~\cite{rakhlin2019consistency} showed that for the specific case of Laplace kernel, kernel ridgeless regression is inconsistent in fixed dimensions even with a data-adaptive bandwidth. This is significant  as the kernel bandwidth hyperparameter can have a large effect on the estimated predictor, and indeed can be set adaptively in high dimensions to achieve consistency. 

In this work, we show that this lack of benign overfitting in fixed dimension is in fact a general property of a broad class of kernel machines. Specifically, we prove that consistency does not hold for the  widely used class of translation-invariant kernels, \ie, kernels that depend only on the difference of the two inputs, under mild spectral conditions. Important examples of such kernels include the Gaussian, Laplace, and Cauchy kernels.

Our counterexample uses a simple data model of the grid on the unit circle for $d=1$, and, in higher dimensions, a multidimensional torus, i.e., the product of unit circles, when $d>1$. For clarity, we outline the $d=1$ case in the main body of the paper, and generalize to $d>1$ in \Cref{app:highd}. 

To prove these results, we derive exact expressions for the generalization mean-squared-error in terms of the Fourier series of the chosen kernel. These exact expressions elucidate the trade-off between approximation and estimation errors when choosing the bandwidth parameter. Our key insight is that while a small bandwidth reduces the estimation error, it worsens the approximation error. Our exact expressions enable us to provide a constant lower bound on the generalization error as the number of samples grows to infinity.

\paragraph{Related work} 
Several recent works have demonstrated the existence of benign interpolation in high dimensions (e.g., when dimension is scales linearly with the number of samples). In this setting, the generalization bounds for linear and random feature interpolation 
depend on the rate of decay of eigenvalues 
\cite{bartlett2020benign, hastie2022surprises}. 
For example, \cite{Mei_hypercontractivity} derives asymptotic risk curves in high dimensions for linear ridge regression and featurized linear ridge regression. Similarly \cite{Mei_RF_Asymptotics} describes the asymptotic behavior of random feature regression, deriving double descent curves. As these works showcase, interpolation is benign in these high dimensional settings, typically proportional asymptotics. Another work considers the consistency of rotation-invariant kernels in high-dimensions \cite{donhauser2021rotational}. 

In contrast, we consider the case of fixed dimensions. We show that in fixed dimensions, interpolation with kernel machines is inconsistent. We also strengthen our result by showing that this conclusion holds regardless of an adaptive bandwidth selection, which is often necessary to achieve consistency for high dimensional settings, e.g. \cite{devroye1998hilbert,belkin2018overfitting,belkin2019does,liang2020just}.  

\section{Problem setup}

\paragraph{Notation}  We denote functions by lowercase letters $a$, sequences by uppercase letters $A$,
vectors by lowercase bold letters $\a$, matrices by uppercase bold letters $\A$. Sequences are indexed using square-brackets, $A[k]$ where $k\in\Integer.$ For vectors, functions, sequences, $\inner{\a,\b},\inner{a,b},\inner{A,B}$ denote their Euclidean, $L^2$, and $\ell^2(\Integer)$ inner products respectively,
while $\norm{\a},\norm{a},\norm{A}$ denote corresponding induced norms, and $\norm{\a}_1,\norm{a}_1,\norm{A}_1$ denote their respective $1$-norms. Like the $L^1$ norm, other norms or inner products will be pointed out explicitly. For a nonnegative integer $N$, we denote the set $\set{0,1,\ldots,N-1}$ by $[N].$ We use $j$ to denote $\sqrt{-1}$, and an overline, $\wb{\a}$, to denote elementwise complex conjugation. The asymptotic \textit{big-Oh} notation $O_n(\cdot),\Omega_n(\cdot),o_n(\cdot),\omega_n(\cdot),$ have their usual meaning where the limit is with respect to $n$. 

We use $N\in\Natural$ as a \textit{resolution} hyperparameter (explicitly defined in \Cref{eqn:resolution}). For a sequence $G\in\ell^1(\Integer)$, and a fixed $N,$ we define an \textit{$N$-hop subsequence} $G_{\ell} \in \ell^1(\Integer)$ as
\begin{align}\label{eq:hop}
G_\ell = \set{G[mN + \ell]}_{m\in\Integer}\quad \text{ defined for }\ell\in\curly{0,1,\ldots, N-1}.
\end{align}

\paragraph{Nonparametric regression} We consider a supervised learning problem in the fixed design setting where we have $n$ labeled samples $(x_i, y_i)\in\mc X\times \mc Y \subseteq\Real^d\times\Real$, with labels generated as,
\begin{align*}
    y_i = f^*(x_i) + \xi_i,
    \qquad\qquad\xi_i \overset{\rm i.i.d.}\sim \Prob_\xi, \qquad\qquad \forall\, i \in [n]~,
\end{align*}
for some unknown target function $f^*.$ The noise distribution $\Prob_\xi$ is centered with a finite variance $\sigma^2>0$. We assume this distribution is independent of the chosen data $\set{x_i}$ and target $f^*$.

For a sequence of datapoints $X_n$, the estimation task is to propose an estimator $\wh f_n=\wh f_n(X_n,\y):\mc X\rightarrow\Real,$ where $\y=(y_i)\in\Real^n$ is the vector of all labels on these data. An estimator's performance (or generalization error) is measured in terms of its mean squared error,
\begin{align*}
    \MSE{\wh f_n, f^*}:=\norm{\wh{f}_n-f^*}^2 =\int_{\mc X} \rbrac{\wh{f}_n(x)-f^*(x)}^2\dif x.
\end{align*}

\paragraph{Weak consistency \cite{gyorfi2002distribution}} 
For a target function $f^*$, a sequence of estimators $\set{\wh f_n}$ is said to be weakly consistent if,
\begin{align*}
\lim_{n\rightarrow\infty}\Exp_{\xivec}\MSE{\wh f_n, f^*} = 0.
\end{align*}
In this paper we show that a certain sequence -- kernel ridgeless regression estimators -- is weakly inconsistent, \ie, $\lim_{n\rightarrow\infty}\Exp_{\xivec}\MSE{\wh f_n, f^*} > 0.$ Note that weak inconsistency implies inconsistency in the strong sense as well.

\paragraph{Kernel interpolation} (also known as kernel ridgeless regression) For an RKHS $\mc H$, the kernel interpolation estimator is given by,
\begin{align}\label{eq:kernel_interpolation}
    \MoveEqLeft\wh f_n = \argmin{f\in\Hilbert}\norm{f}_\Hilbert\qquad\subjectto{\ \ f(x_i)=y_i} \quad \text{  for  } i = \{1,2,\ldots, n\}.
    \end{align}
    The name ridgeless is due to the fact that the solution is equivalent to the following \textit{kernel ridge regression} problem in the limit
    \begin{align}
        \MoveEqLeft\wh f_n = \lim_{\lambda\rightarrow 0^+} 
        \underbrace{\left(\argmin{f\in\Hilbert} \sum_{i=1}^n (f(x_i)-y_i)^2 + \lambda\norm{f}^2_\Hilbert\right)}_{:= \wh f_{n,\lambda}}.
    \end{align}
    
    Every RKHS is in one-to-one correspondence with a positive definite kernel function $K:\mc X\times\mc X\rightarrow\Real.$ Define the kernel matrix $\Kmat=(k(x_i,x_j))$ of pairwise evaluations of the kernel on the training data. 
    Due to the representer theorem \cite{scholkopf2001generalized},  the solution to \cref{eq:kernel_interpolation} lies in the span of $n$ basis functions $K(x_i,x)$ and can be written as
    \begin{align}
    \MoveEqLeft\wh{f}_n(x) = \sum_{i=1}^n\wh\alpha_i K(x_i,x),\qquad\wh\alphavec=(\wh\alpha_i)\in\Real^n\qquad\wh\alphavec:=\Kmat\inv\y,\tag{Kernel interpolation}
\end{align}
where $\y\in\Real^n$ is the vector of all labels. The above follows as a direct consequence of $\wh f_{n,\lambda}=K(\cdot,X)(\Kmat+\lambda I_n)\inv\y$, and that $\wh f_n=\lim_{\lambda\rightarrow0^+}f_{n,\lambda}$. The matrix $\Kmat$ is invertible because the kernel is positive definite, otherwise interpolation in an RKHS is not always possible. The (Riesz) representer of a given kernel $K$ at a datum $x_\star$ is an element of $\Hilbert$, denoted by $K(x_\star,\cdot): \mc X \rightarrow \mathbb{R}$. It is the evaluation functional of $x_\star\in\mc X$, \ie, $\inner{f,K(x_\star,\cdot)}_\Hilbert=f(x_\star)$ for all $f\in\Hilbert.$ The basis functions above are thus the representers of the training data $\set{x_1,x_2,\ldots x_n}.$

We define the \textit{restriction operator} $R_n$, and its adjoint, the  \textit{extension operator} $R_n^*$, as follows:
\begin{align}\label{eq:restriction_extension}
    &R_n:\Hilbert\rightarrow \Real^n 
    &R_n f &:= (f(x_i)) \in \Real^n,\qquad \forall\,f\in \Hilbert\\
    &R_n^*:\Real^n\rightarrow\Hilbert &R_n^*\alphavec &:= \sum_{i=1}^n\alpha_i K(x_i,\cdot) \in \Hilbert,\qquad\forall\,\alphavec=(\alpha_i)\in\Real^n
\end{align}
that evaluates the function on the data. Here, since $L^2_n{\cong}\,\Real^n$ are isometric, we are abusing notation in favour of simpler expressions.
This gives us the following equations
\begin{align*}
    \y = R_n f^* + \xivec,\qquad\text{and}\qquad
    \wh f_n = R_n^*\Kmat\inv\y.
\end{align*}

For an RKHS we have two data dependent operators, the \textit{integral operator} and the \textit{empirical operator}, respectively given by,
\begin{align}
    &\mc T_{K}{f}(x) = \int_{\mc X} K(x,z)f(z)\dif z,\\
    &\mc T_{K}^n{f}(x) = \sum_{z \in X_n} K(x,z)f(z).\label{eq:def:emp_cov}
\end{align}

Eigenfunctions of $\mc T_K$ that form a countable orthonormal basis of $L^2(\mc X)$ can be used to provide an alternate representation for the $\Hilbert$-norm via the identity,
\begin{align*}
    \inner{f,g}_\Hilbert = \sum_{k\in\Integer}\frac{\inner{f,\varphi_k}\inner{g,\varphi_k}}{\sigma_k} \qquad \norm{f}_\Hilbert^2 = \sum_{k\in\Integer}\frac{\inner{f,\varphi_k}^2}{\sigma_k}
\end{align*}
where $(\sigma_k,\varphi_k)$ is an eigen-pair, \ie, $\mc T_{K}\varphi_k=\sigma_k\cdot\varphi_k$, with $\sigma_k\in\Real_+$ and $\varphi_k\in L^2$.

\paragraph{Fourier analysis:} We recall some useful quantities from Fourier analysis to be used later.

\begin{definition}[Fourier basis]
Let $\phi_k(x) = e^{jkx}$ for $k\in\Integer$, which satisfy
\begin{align*}
\inner{\phi_k,\phi_\ell}:=\int_{-\pi}^\pi\phi_k(t)\overline{\phi_\ell(t)}\,\frac{\dif t}{2\pi}=\frac1{2\pi}\int_{-\pi}^\pi e^{j(k-\ell)t}\dif t=\begin{cases}
    1 & k=\ell\\
    0 & k\neq \ell
\end{cases}.
\end{align*}
\end{definition}
The normalization factor $\frac1{2\pi}$ comes from the uniform density on $[-\pi,\pi).$
An important tool in our analysis is the Fourier series representation of functions $\mc X\mapsto\Real.$ In general, any integrable function $\Real\rightarrow\Real$ periodic with period $2\pi,$ admits such a representation.

\begin{definition}[Fourier Series]
For $f\in L^1_{\unitcircle}$, let $F$ be the Fourier series indexed by $k\in\Integer$,
\begin{align*}
    f(t) &= \sum_{k\in\Integer} F[k] \phi_k(t)=\sum_{k\in\Integer} F[k]e^{jkt},\qquad\forall\, t\in\unitcircle\\
    F[k] &= \inner{f,\phi_k}=\int f(t)\wb{\phi_k(t)}\,\frac{\dif t}{2\pi}=\frac1{2\pi}\int_{-\pi}^\pi f(t)e^{-jkt}\dif t,\qquad \forall\,k\in\Integer~.
\end{align*}
\end{definition}

\begin{definition}[DFT Matrix]
The normalized discrete Fourier transform (DFT) matrix is 
\begin{align*}
\U = \begin{bmatrix}
    \u_0   & \cdots & \u_{N-1}
\end{bmatrix},
\qquad 
\u_{\ell} = \frac{1}{\sqrt{N}}
\begin{bmatrix}
1 & e^{-j\frac{2\pi}N\ell} &\ldots & e^{-j\frac{2\pi}{N}(N-1)\ell}
\end{bmatrix}\tran,\quad \ell \in [N].
\end{align*}
\end{definition}
Notice that $\U\U\herm=\U\herm\U=\I$, where we use $\herm$ to denote the conjugate transpose (hermitian) of a matrix.

\begin{proposition}[Parseval's theorem]
\label{thm:parseval_continuous}
For a continuous function $f:\unitcircle\rightarrow\Real$ with Fourier series $F$, 
\begin{align*}
    \frac1{2\pi}\int_{-\pi}^\pi |f(t)|^2\dif t = \sum_{k\in\Integer} |F[k]|^2.
\end{align*}
\end{proposition}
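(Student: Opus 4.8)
The plan is to deduce Parseval's identity from two ingredients: Bessel's inequality (the ``easy'' half) together with completeness of the Fourier system $\{\phi_k\}_{k\in\Integer}$ in $L^2_{\unitcircle}$ (the ``hard'' half). I would start from the orthonormality relation $\inner{\phi_k,\phi_\ell}=\mathbf 1_{\{k=\ell\}}$ already recorded in the definition of the Fourier basis, and introduce the symmetric partial sums $S_M f := \sum_{|k|\le M} F[k]\,\phi_k$, where $F[k]=\inner{f,\phi_k}$. Expanding $\norm{f-S_M f}^2$ and using orthonormality gives the exact identity $\norm{f}^2 = \norm{f-S_M f}^2 + \sum_{|k|\le M}|F[k]|^2$. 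Letting $M\to\infty$ yields Bessel's inequality $\sum_{k\in\Integer}|F[k]|^2 \le \norm{f}^2 < \infty$, so in particular the series on the right-hand side of the proposition converges; it then remains only to show $\norm{f-S_M f}\to 0$.

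For the convergence step, the key observation is that $S_M f$ is the $L^2_{\unitcircle}$-orthogonal projection of $f$ onto $\operatorname{span}\{\phi_k:|k|\le M\}$, so $\norm{f-S_M f}\le \norm{f-p}$ for every trigonometric polynomial $p$ of degree at most $M$. Now I invoke the density of trigonometric polynomials in $C(\unitcircle)$ under the sup norm — via Fej\'er's theorem (uniform convergence of the Ces\`aro means of the Fourier series of a continuous periodic function) or equivalently Stone--Weierstrass. Since $f$ is continuous on the compact circle, for any $\varepsilon>0$ there is a trigonometric polynomial $p_\varepsilon$ with $\sup_t|f(t)-p_\varepsilon(t)|<\varepsilon$; because the underlying measure $\tfrac{\dif t}{2\pi}$ is a probability measure, this forces $\norm{f-p_\varepsilon}_{L^2}\le\varepsilon$. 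Taking $M\ge\deg p_\varepsilon$ gives $\norm{f-S_M f}\le\varepsilon$, hence $\norm{f-S_M f}\to 0$. Plugging this into the identity from the first paragraph, $\norm{f}^2 = \lim_{M\to\infty}\sum_{|k|\le M}|F[k]|^2 = \sum_{k\in\Integer}|F[k]|^2$, and recalling $\norm{f}^2=\tfrac1{2\pi}\int_{-\pi}^\pi|f(t)|^2\dif t$ completes the proof.

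The only substantive obstacle is the completeness/density step — that trigonometric polynomials uniformly approximate continuous functions on $\unitcircle$ — which I would cite (Fej\'er or Stone--Weierstrass) rather than reprove, since everything else is routine Hilbert-space bookkeeping. A minor technical point worth noting: for real-valued $f$ the coefficients $F[k]$ are complex with $F[-k]=\overline{F[k]}$, so the modulus signs in $\sum_k|F[k]|^2$ are essential, but the argument above is carried out verbatim over $\mathbb{C}$ and is unaffected.
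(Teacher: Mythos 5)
Your proof is correct. The paper itself states this Parseval identity as a classical fact and never proves it (it is only invoked later, e.g.\ in the noise-free and noisy estimation error computations), so there is no internal argument to compare against; your route — Pythagoras/Bessel from the orthonormality of $\{\phi_k\}$, the optimality of the partial sum $S_M f$ as an $L^2$ projection, and completeness via Fej\'er (or Stone--Weierstrass) density of trigonometric polynomials in $C(\unitcircle)$ transferred to $L^2$ under the probability measure $\tfrac{\dif t}{2\pi}$ — is the standard textbook proof and is sound as written, including your remark that for real-valued $f$ one has $F[-k]=\overline{F[k]}$ so the moduli in $\sum_k |F[k]|^2$ are needed.
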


\section{Model}
\label{sec:model}

We now describe our setting and state our main result: kernel interpolation is weakly inconsistent.

\paragraph{Data design (grid on the unit circle)}

We describe the case of $d=1$ and focus on $\mc X = {\unitcircle}$, viewed as the unit circle. An extension to $d>1$ is deferred to \Cref{app:highd} where we consider $\unitcircle^d$.
We consider discrete, evenly-spaced grids indexed by a \textit{resolution} hyperparameter $N\in\Natural,$ given by
\begin{align}
\label{eqn:resolution}
    \X_N = \set{x_0,\ldots,x_{N-1}} \qquad x_i := \frac{2\pi }{N}i-\pi \qquad \forall\,i=0,\ldots, N-1.
\end{align}

We call $N$ the resolution parameter of the grid on $\unitcircle$, and assume $N$ is even for simplicity. Observe that Riemannian sums over the grid $\X_N$ for integrable functions converge to integrals on the continuum $\unitcircle$. Alternatively, the empirical distribution on the grid weakly converges to the uniform measure on the continuum. Note for $d=1,$ the total number of samples $n$ equals the resolution $N.$

For $d>1,$ we consider $\mc X=\unitcircle^d$, the product of $d$ unit circles, and the respective grids, along each dimension. Thus $N$ is the number of samples per dimension, whereby the total number of samples $n=N^d$.

\paragraph{Translation-invariant kernels}

We consider (periodic) kernels parameterized by a positive bandwidth parameter\footnote{In machine learning literature, the bandwidth may often be denoted as $1/M$ instead.} $M$,
\begin{align*}
    K(x,x^\prime) = g\round{M(x-x^\prime \mod \unitcircle)},\qquad x,x^\prime\in\mc X
\end{align*}
for some even function $g: \mathbb{R} \rightarrow \mathbb{R}$,
where we denote,
\begin{align}\label{eq:def:mod_unitcircle}
    \theta \mod \unitcircle = ((\theta + \pi) \mod 2\pi) - \pi \in \unitcircle~.
\end{align}

We denote the RKHS corresponding to $K$ by $\Hilbert.$ For ease of notation, when $M=1,$ we refer to this as the base kernel and the base RKHS $\Hilbert_0$.
Define $G_0,G:\Integer\rightarrow\Complex$ as the Fourier series of $g$, \ie,
\begin{subequations}\label{eq:def:fourier_series_of_kernel}
\begin{align}\label{eq:FS_of_g}
    g(M(\theta \mod \unitcircle)) &= \sum_{k\in\Integer} G[k] \exp(jk\theta)~,  &G[k] &= \frac{1}{2\pi}\int_{-\pi}^\pi g(M\theta)\exp(-jk\theta)\dif\theta~,\\
    g(\theta\mod\unitcircle) &= \sum_{k\in\Integer} G_0[k] \exp(jk\theta)~,
    &G_0[k] &= \frac{1}{2\pi}\int_{-\pi}^\pi g(\theta)\exp(-jk\theta)\dif\theta~.
\end{align}
\end{subequations}
While usually the bandwidth scales the input, we note our analysis also holds for different mechanisms that satisfy the kernel assumptions given later. For symmetric positive definite kernels, $g$ is an even function whereby we have that $G$ is real. Furthermore $g$ is real whereby,
\begin{align*}
    G[-k] = G[k] > 0 \quad \forall\, k \in \Integer~.
\end{align*}

\begin{proposition}\label{prop:eigenvectors_of_Kmat} $\u_\ell$ and $\wb{\u_\ell}$ are eigenvectors of $\Kmat=(K(x_i,x_j))\in\Real^{N\times N},$ with eigenvalue $\lambda_\ell=N\norm{G_\ell}_1$, \ie, $\Kmat\u_\ell = \lambda_\ell\u_\ell$ and $\Kmat\wb{\u_\ell} = \lambda_\ell\wb{\u_\ell}$.
Furthermore,
\begin{align*}
    \Kmat = \sum_{\ell=0}^{N-1}\lambda_\ell\u_\ell\u_\ell\herm,\qquad\Kmat\inv = \sum_{\ell=0}^{N-1}\frac1{\lambda_\ell}\u_\ell\u_\ell\herm,\qquad\Kmat^{-2} = \sum_{\ell=0}^{N-1}\frac1{\lambda_\ell^2}\u_\ell\u_\ell\herm.
\end{align*}
\end{proposition}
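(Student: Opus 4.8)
The plan is to exploit the fact that, on the equispaced grid $x_i=\tfrac{2\pi}{N}i-\pi$, the matrix $\Kmat$ is \emph{circulant}: since $x_a-x_b=\tfrac{2\pi}{N}(a-b)$ the shifts $-\pi$ cancel, so $\Kmat_{ab}=g\!\left(M\!\left(\tfrac{2\pi}{N}(a-b)\bmod\unitcircle\right)\right)$ depends only on $(a-b)\bmod N$. I would write this common value as $c[(a-b)\bmod N]$, where $c:\Integer\to\Real$ is the $N$-periodic sequence with $c[p]=\sum_{k\in\Integer}G[k]\,e^{jk\frac{2\pi}{N}p}$ by \cref{eq:FS_of_g}. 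Circulant matrices are diagonalized by the DFT, so the two steps are: (i) verify the eigenvalue--eigenvector relation for $\u_\ell$ directly, obtaining $\lambda_\ell$ as a length-$N$ exponential sum of the $c[p]$'s; and (ii) identify that sum with $N\norm{G_\ell}_1$ by an aliasing (Poisson-type) argument.

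For (i): using $(\u_\ell)_b=\tfrac1{\sqrt N}e^{-j\frac{2\pi}{N}b\ell}$ and reindexing by $p=(a-b)\bmod N$, since the summand $c[p]\,e^{j\frac{2\pi}{N}p\ell}$ is $N$-periodic in $p$,
\begin{align*}
(\Kmat\u_\ell)_a=\sum_{b=0}^{N-1}c[(a-b)\bmod N]\,(\u_\ell)_b=(\u_\ell)_a\sum_{p=0}^{N-1}c[p]\,e^{j\frac{2\pi}{N}p\ell},
\end{align*}
which exhibits $\u_\ell$ as an eigenvector with $\lambda_\ell=\sum_{p=0}^{N-1}c[p]\,e^{j\frac{2\pi}{N}p\ell}$. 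For (ii) I would substitute the Fourier series of $c$, swap the (absolutely convergent, as $G\in\ell^1(\Integer)$) sums, and use $\sum_{p=0}^{N-1}e^{j\frac{2\pi}{N}(k+\ell)p}=N\,\mathbf{1}\{N\mid k+\ell\}$ to collapse the $k$-sum, giving $\lambda_\ell=N\sum_{m\in\Integer}G[mN-\ell]$; then $G[k]=G[-k]\ge0$ together with reindexing $m\mapsto-m$ turns this into $N\sum_{m}G[mN+\ell]=N\norm{G_\ell}_1$. The statement for $\wb{\u_\ell}$ then follows by conjugating $\Kmat\u_\ell=\lambda_\ell\u_\ell$, since $\Kmat$ is real and $\lambda_\ell\in\Real$ (equivalently, $\wb{\u_\ell}=\u_{N-\ell}$ with indices mod $N$). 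Finally $\U\herm\U=\I$ makes $\{\u_\ell\}_{\ell\in[N]}$ an orthonormal eigenbasis, so $\Kmat=\sum_{\ell=0}^{N-1}\lambda_\ell\u_\ell\u_\ell\herm$; since the kernel is positive definite each $\lambda_\ell>0$, and the expansions of $\Kmat\inv$ and $\Kmat^{-2}$ are obtained by replacing $\lambda_\ell$ with $\lambda_\ell^{-1}$ and $\lambda_\ell^{-2}$ in this spectral sum.

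There is no substantial obstacle here — at heart this is just the diagonalization of a circulant matrix — so the only points needing care are bookkeeping ones: tracking the sign of the frequency index in the aliasing step (the appearance of $G[mN-\ell]$ rather than $G[mN+\ell]$, reconciled by evenness of $g$, hence of $G$), and justifying the interchange of the finite and infinite summations, for which $\ell^1$-summability of $G$ (or continuity of $g$) suffices.
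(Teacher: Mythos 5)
Your proof is correct and follows essentially the same route as the paper: both expand the kernel entries in the Fourier series of $g$, apply $\u_\ell$, swap the ($\ell^1$-justified) sums, and use the finite geometric-sum aliasing identity to collapse the spectrum onto the $N$-hop subsequence, giving $\lambda_\ell=N\norm{G_\ell}_1$, with the spectral formulas for $\Kmat$, $\Kmat\inv$, $\Kmat^{-2}$ then following from unitarity of the DFT and positivity of the eigenvalues. Your explicit circulant framing and the $G[mN-\ell]$ versus $G[mN+\ell]$ bookkeeping (resolved by evenness of $G$) are only cosmetic differences from the paper's direct computation.
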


\begin{proposition}\label{prop:eigenfunctions_of_TK}
For any $M>0,$ the Fourier basis are eigenfunctions of the kernel integral operator $\mc T_{K}$ with eigenvalues $G$, \ie, we have,
\begin{align*}
    \mc T_{K}\phi_k=G[k]\cdot\phi_k~.
\end{align*}
\end{proposition}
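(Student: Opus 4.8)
The plan is to compute $\mc T_{K}\phi_k$ directly, by expanding the kernel in its Fourier series and invoking orthogonality of the Fourier basis on $\unitcircle$. First I would use that $\mu = {\rm Uniform}(\unitcircle)$, so that $\mc T_{K}\phi_k(x) = \frac1{2\pi}\int_{-\pi}^\pi K(x,x^\prime)\phi_k(x^\prime)\dif x^\prime$. Next, I would exploit the shift-invariant structure: writing $\theta = x - x^\prime$, the kernel equals $g(M(\theta\mod\unitcircle))$, and since every exponential $\exp(jm\theta)$ is $2\pi$-periodic, the reduction $\mod\unitcircle$ inside the Fourier series is immaterial, so by \cref{eq:FS_of_g} we have $K(x,x^\prime) = \sum_{m\in\Integer} G[m]\exp(jm(x-x^\prime))$.

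Substituting this series into the integral, interchanging the sum with the integral, and pulling $\exp(jmx)$ out of the $x^\prime$-integral leaves $\sum_{m\in\Integer} G[m]\exp(jmx)\cdot\frac1{2\pi}\int_{-\pi}^\pi\exp(j(k-m)x^\prime)\dif x^\prime$. By the orthogonality relation $\inner{\phi_k,\phi_m} = \mathbf{1}_{\curly{k=m}}$, only the term $m=k$ survives, which yields $\mc T_{K}\phi_k(x) = G[k]\exp(jkx) = G[k]\phi_k(x)$, the claimed identity.

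The one point that needs care --- and hence the only real obstacle, though a minor one --- is justifying the interchange of the infinite sum and the integral. I would do this via dominated convergence / Fubini under the standing assumption $g\in L^1$ (equivalently $G\in\ell^1(\Integer)$), since then the partial sums of the Fourier series are dominated on the compact interval $[-\pi,\pi]$; alternatively, one can first establish the identity for the truncated kernels $\sum_{\abs{m}\le M_0}G[m]\exp(jm\theta)$, for which the computation above is finite and literal, and then pass to the limit in $L^2_\mu$ using boundedness of $\mc T_{K}$ on $L^2_\mu$. Everything else is a single line of computation, and the same argument applied to $\wb{\phi_k}=\phi_{-k}$ (or just noting $G[k]=G[-k]$) recovers the real cosine form if desired.
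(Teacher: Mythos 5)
Your proof is correct, but it takes a slightly different route than the paper. The paper's proof is a one-line change of variables: writing $u = x^\prime - x$ in $\frac1{2\pi}\int_{-\pi}^{\pi} g(M(x^\prime - x \mod \unitcircle))e^{jkx^\prime}\dif x^\prime$, using the $2\pi$-periodicity of the integrand to keep the range $[-\pi,\pi]$, pulling out $e^{jkx}$, and recognizing the remaining integral as exactly the defining expression for $G[k]$ in \cref{eq:FS_of_g} (with evenness of $g$ absorbing the sign of $k$). That argument needs no expansion of the kernel and no interchange of sum and integral. You instead expand $K(x,x^\prime)=\sum_{m\in\Integer}G[m]e^{jm(x-x^\prime)}$, swap sum and integral, and kill all terms but $m=k$ by orthogonality; this is the standard "convolution operators are diagonalized by characters" computation, and you correctly identify the only delicate point, namely justifying the interchange, which you handle via $\sum_k|G[k]|<\infty$ (Assumption \ref{ass:scale}) or a truncation argument. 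The trade-off: the paper's substitution is shorter and requires nothing beyond the existence of the Fourier coefficient, whereas your version leans on the summability assumption (or on pointwise validity of the Fourier expansion of $g$) but makes the spectral picture more transparent and transfers verbatim to the torus case of \Cref{app:highd}. Either proof is acceptable here.
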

The proofs to these propositions are provided in \Cref{sec:technical:eigenfunctions}.

For $X_N$, we define the \textit{restriction operator} $R_N$, and its adjoint, the \textit{extension operator},
\begin{align}\label{eq:restriction_extension_H}
    &R_N:\Hilbert\rightarrow \Real^N~,\qquad R_N f = \round{f\round{\frac{2\pi}N(i-1)-\pi}}_i \in\Real^N~,\\
    &R_N^*:\Real^N\rightarrow \Hilbert~, \qquad R_N^*\alphavec := \sum_{i=0}^{N-1} \alpha_i K(x_i,\cdot)\in\Hilbert~.
\end{align}
We also use the notation
\begin{align*}
    \inner{\alphavec,K(\X_N, \cdot)}_N := \sum_{i=0}^{N-1} \alpha_i K(x_i,\cdot)
\end{align*}
to keep expressions simple. With this notation, the labels and the kernel interpolator can be written as
\begin{align}\label{eq:kernel_interpolation_as_span}
    \wh f_N = R_N^*\Kmat\inv\y= \inner{\Kmat\inv\y,K(\X_N,\cdot)}_N~.
\end{align}
\begin{definition}[Span of Riesz Representers]
Functions in the range of $R^*_N$, and of $\mc T_K^N$, are in the span of the representers $\{K(x_i,\cdot)\}_{i=1}^N$. \end{definition}

\paragraph{Target function} We assume the target function lies in the base RKHS $\Hilbert_0$, \ie, $\Hilbert$ with $M=1$, and has a norm $\norm{f^*}_{\Hilbert_0} = O_{M,N}(1)$.
As the target function is defined on the unit circle, it admits a Fourier series, 
\begin{align}\label{eq:target_fn}
f^* = \sum_{k \in \mathbb{Z}} V[k] \phi_{k}~.  
\end{align}
To keep derivations simple, we will assume, without loss of generality, that $V[k] \in \Real$ for all $k$ (i.e. the target function is even). It is straightforward to extend this argument to all $f^*$. We can decompose $f^*$ into an even and odd component (by $f^*(x) = \frac{f^*(x) + f^*(-x)}{2} + \frac{f^*(x) - f^*(-x)}{2}$). The even component will only have a cosine series (and hence real $V[k]$), and the odd component will only have a sine series (imaginary $V[k]$). The argument for the case of targets with imaginary $V[k]$ is identical to that for targets with real $V[k]$. Even and odd functions are in orthogonal subspaces of $L^2$ and of $\mc H$, whereby for general complex $V[k]$, the errors we derive are the sum of the errors of the even and odd components, and the arguments go through.

Recall the definition of the restriction and extension operators in \cref{eq:restriction_extension_H}.
Let $P_{\X}$ be the $L^2$-projection operator onto the span of the representers, \ie,
\begin{subequations}\label{eq:projection}
\begin{align}
    P_{\X}f &:= \argmin{h\in\Hilbert}\curly{\norm{f-h}\ \bigg|\ h=\sum_{i=1}^N\alpha_i K(x_i,\cdot) \text{ for some } (\alpha_i)\in\Real^N}~,\\
    \alphavec^*&:=(\alpha_i^*)\qquad \text{such that}\qquad P_{\X}f^* =\sum_{i=1}^N\alpha^*_i K(x_i,\cdot)~,\\
    f^*_\perp&:=f^*-P_{\X}f^*,
\end{align}
\end{subequations}
where $f^*_\perp$ is orthogonal to all functions in $\Span\curly{K(x_i,\cdot)}.$
An immediate identity using the evaluation operator $R_N$ is,
\begin{align*}
    R_N P_{\X}f^*=\Kmat\alphavec^*\qquad\text{and}\qquad R_N^*\alphavec^*=P_{\X}f^*~.
\end{align*}

We can decompose the target function as
\begin{align*}
    f^* &= P_{\X}{f^*} + f^*_{\perp} 
    = \sum_{i=0}^{N-1} \alpha^*_i K(x_i,\cdot) + f^*_{\perp}
    = \inner{\alphavec^*,K(\X_N,\cdot)}_N + f^*_{\perp}~.
\end{align*}
Using this, the vector of labels, and the kernel interpolation estimator can be written as,
\begin{align}
    \y&=R_Nf^* + \xivec = R_N P_{\X} f^* + R_N f_\perp^* + \xivec =  \Kmat\alphavec^* + R_N f_\perp^* + \xivec~,\\
    \wh{f}_N&=R_N^*\Kmat\inv\y= P_{\X}f^* + \inner{\Kmat\inv R_N f_\perp^*,K(\X_N,\cdot)}_N +
    \inner{\Kmat\inv \xivec,K(\X_N,\cdot)}_N,\label{eq:kernel_interpolator_decomposition}
\end{align}
where we have used the expression from \Cref{eq:kernel_interpolation_as_span}.
\section{Main result: Inconsistency of kernel interpolation}

Our main result holds under certain assumptions on the translation-invariant kernels. Below, we assume $M',i,i^*$ are all non-negative integers. Recall that $G$ is the Fourier series of the kernel function, see \Cref{eq:FS_of_g}. Note $G$ depends on $M$ but $G_0$, the Fourier series of the kernel corresponding to $\Hilbert_0$ - the base RKHS, does not.

\begin{assumption}[Integrability]
\label{ass:scale}
We assume the kernel is integrable. In particular, the integral $\displaystyle{\int_{-\pi}^\pi g(Mx) \dif x}$ exists and is finite for all $0<M< \infty$.
\end{assumption}

\begin{assumption}[Spectral Tail]
\label{ass:tail}
For all $k \in \Integer_{\geq 0}$, there exists a constant $C_1 > 0$ such that,
\begin{align}\label{eq:tail_assumption}
    |G[M^\prime k + i]| \leq \frac{C_1 |G[i]|}{1 + k^2}
\end{align}
holds for all $M' \geq M > 0$ and for all $i \leq M'$, except $o_{M'}(M')$ many.
\end{assumption}

\begin{assumption}[Spectral Head]
\label{ass:head}
There exist constants $C_2, C_3\in\Real_+$ and $i^*\in\Integer_{\geq 0}$ such that for $M\geq C_2$, we have that for all $0 \leq M' < M$, $|G[i^*]| \leq C_3  |G[i^*+M']|$  and $\abs{G_0[i^*]}>0$.
\end{assumption}

To simplify analysis for many kernel functions, we give a sufficient condition that implies Assumptions \ref{ass:scale}-\ref{ass:head}, and is easy to verify for many functions.

\begin{condition}[Monotonic Boundedness]
\label{con:monotonic}
There exist constants (independent of the bandwidth $M$) $c,C,C',C''>0$, a constant $c'(M)>0$ (that may depend on $M$), and a bounded, monotonically decreasing function $f:\Real_{\geq 0}\rightarrow\Real$  with (i) $0 < \frac{f(x+k)}{f(x)} \leq \frac{C''}{1+k^2}$ for all $x \in \Real_{\geq 0}$,$k \in \Integer$, and (ii) $\frac{f(1 + x)}{f(x)} \geq C'$ for $0 \leq x \leq 1$, such that 
\begin{align*}
c f(\tfrac{k}M) \leq \frac{G[k]}{c'(M)} \leq C f(\tfrac{k}M)
\end{align*} for all $k \in \Integer_{\geq 0}$, $M \in \Real^+$.
\end{condition}

The proof of the following propositions are provided in \Cref{sec:assumptions}.

\begin{proposition}
\label{prop:monotone_assumptions}
If the Fourier series coefficients $G[i]$ satisfy \Cref{con:monotonic} (Monotonic Boundedness), then the kernel satisfies Assumptions \ref{ass:scale}-\ref{ass:head}.
\end{proposition}
\begin{proposition}
\label{lemma:ex_kernels_inconsistent}
The Gaussian $G[k] = \exp(-\frac{k^2}{M^2})$, Laplacian $G[k] = \frac{1}{1 + \frac{k^2}{M^2}}$, and Cauchy $G[k] = \exp(-\frac{|k|}{M})$ kernels (wrapped on the circle) satisfy \Cref{con:monotonic} (Monotonic Boundedness).
\end{proposition}

We comment on each of the Assumptions \ref{ass:scale}-\ref{ass:head} below.

\begin{remark}[Note on \Cref{ass:scale}]
A sufficient condition for our result is $|G[k]| < \infty$ for all $k \in \Integer$. Assumption {\ref{ass:scale}} implies this inequality by the definition of the Fourier coefficients.
\end{remark}

\begin{remark}[Square-integrable derivative $\implies$  Assumption \ref{ass:tail}]
Combined with Assumption {\ref{ass:scale}}, the exchange formula \cite{lieb2001analysis} for Sobolev spaces implies Assumption {\ref{ass:tail}} is equivalent to the kernel and its first derivative being $L^2$-integrable (for fixed bandwidth). Therefore, this assumption can be viewed as a condition on the smoothness of the kernel.
\end{remark}

\begin{remark}[Interpretation of \Cref{ass:head}]
Intuitively, Assumption {\ref{ass:head}} enforces flatness in the frequency domain, or equivalently, sharpness in the $\mc X$-domain. A larger bandwidth $M$ leads to a longer sequence of similar coefficients $G[k]$ for $k \in \{i^*,\ldots,i^*+M\}$, giving a sharper kernel in the $\mc X$-domain.
\end{remark}

We present the main results in the following theorems. Recall that $\Hilbert_0$ is the base RKHS.

\begin{theorem}[Inconsistency for all functions when $G$ is monotonically bounded]
\label{thm:inconsistency_monotonic}
Consider a fixed non-zero regression function $f^*$ that (i) has square-integrable zeroth and first derivatives, and (ii) can be expressed as a convergent Fourier series. Then, interpolation with 
a real-valued translation-invariant kernel satisfying \Cref{con:monotonic} 
is inconsistent for $f^*$, for any bandwidth, even if chosen adaptively.
\end{theorem}

Recall the definition of the base RKHS $\Hilbert_0$, above equation \eqref{eq:def:fourier_series_of_kernel}, corresponding to the kernel with bandwidth $M=1$.

\begin{theorem}[Inconsistency for all Bandwidths]
\label{theorem:inconsistent_all_band}
For any translation-invariant kernel satisfying Assumptions \ref{ass:scale}-\ref{ass:head}, there exists a function with constant $\mathcal{H}_0$-norm for which kernel interpolation is inconsistent for any bandwidth, even adaptive to the data set.
\end{theorem}

\begin{theorem}[Inconsistency for all Functions ($M < N$)]
\label{theorem:inconsistent_all_f}
For any translation-invariant kernel satisfying Assumptions \ref{ass:scale}-\ref{ass:tail}, with any (even data-adaptive) bandwidth $M \leq N$, kernel interpolation is inconsistent for all targets that can be expressed as convergent Fourier series. In particular, kernel interpolation with a fixed bandwidth is inconsistent for all such targets.
\end{theorem}

To prove these results we apply Fourier analysis to compute an exact expression for the MSE of kernel interpolation. We decompose the MSE for a target function into three components - (i) an approximation error, measuring how close the target function is to the span of the representers, (ii) a noiseless estimation error, measuring the error in the absence of noise, and (iii) a noisy estimation error, measuring the average error if the target function is $0$.

We then apply Parseval's Theorem, which relates these errors terms to the Fourier series of the target function, and of the kernel. Proving that the MSE is bounded away from $0$ will rely on our assumptions on the tail and the head of the kernel spectrum.
\section{Decomposition of the mean squared error}

We now derive an exact expression for the MSE as a sum of three error terms:
the approximation error, the noise-free estimation error, and the noisy estimation error. This useful expression will allow us to prove the main theorems of the previous section. 
Recall the definition of $f_\perp^*,\alphavec^*,P_{\X}f^*$ from \Cref{eq:projection}.
\begin{lemma}[MSE Decomposition]
\label{lemma:MSE_decomp}
For any square integrable target function $f^*$, the kernel interpolation $\wh f_N$ estimator satisfies,
\begin{align*}
     \Exp_{\xivec}\MSE{\wh f_N,f^*} &=\! \underbrace{\norm{f^* - P_{\X}f^*}^2 }_{\text{\rm Approximation Error}} 
    + \underbrace{\norm{\inner{\Kmat\inv R_N\curly{f^* - P_{\X}f^*},K(X_N,\cdot)}_N}^2 }_{\text{\rm Noise-free Estimation Error}} \\
    &+ \underbrace{\Exp_{\xivec}{ \norm{\inner{\Kmat\inv\xivec,K(X_N,\cdot)}_N}^2 }}_{\text{\rm Averaged Noisy Estimation Error}}~.
\end{align*}
\end{lemma}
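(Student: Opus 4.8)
The plan is to expand the squared $L^2_\mu$-norm of $\wh f_N - f^*$ directly from the decomposition in \cref{eq:kernel_interpolator_decomposition}, relying on the single structural fact that forces all cross terms to vanish: $P_{\X}$ is the \emph{$L^2_\mu$}-orthogonal projection onto $\Span\curly{K(x_i,\cdot)}$ (see \cref{eq:projection}), so $f^*_\perp$ is $L^2_\mu$-orthogonal to that span.

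First I would subtract $f^*$ from \cref{eq:kernel_interpolator_decomposition} and, writing $f^* - P_{\X}f^* = f^*_\perp$, express
\begin{align*}
  \wh f_N - f^* = -\,f^*_\perp + a + b,\qquad
  a := \inner{\Kmat\inv R_N f^*_\perp,\,K(X_N,\cdot)}_N,\qquad
  b := \inner{\Kmat\inv \xivec,\,K(X_N,\cdot)}_N .
\end{align*}
Both $a$ and $b$ lie in $\Span\curly{K(x_i,\cdot)}$ by construction, whereas $f^*_\perp$ is $L^2_\mu$-orthogonal to that subspace, so the Pythagorean identity in $L^2_\mu$ gives $\norm{\wh f_N - f^*}^2 = \norm{f^*_\perp}^2 + \norm{a}^2 + 2\inner{a,b} + \norm{b}^2$. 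The next step is to take $\Exp_{\xivec}$: the first term is deterministic, $a$ is independent of $\xivec$, and $b$ is linear in $\xivec$, hence $\Exp_{\xivec}b = \inner{\Kmat\inv \Exp_{\xivec}\xivec,\,K(X_N,\cdot)}_N = 0$ because $\Prob_\xi$ is centered; consequently the cross term satisfies $\Exp_{\xivec}\inner{a,b} = \inner{a,\Exp_{\xivec}b} = 0$. (All the expectations are finite since $\xivec$ has finite variance and $b$ is a fixed quadratic form in $\xivec$ on a finite-dimensional space.) This leaves $\Exp_{\xivec}\MSE{\wh f_N,f^*} = \norm{f^*_\perp}^2 + \norm{a}^2 + \Exp_{\xivec}\norm{b}^2$, and substituting $f^*_\perp = f^* - P_{\X}f^*$ (so that $R_N f^*_\perp = R_N\curly{f^* - P_{\X}f^*}$) matches the three labelled error terms in the statement.

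I do not expect a genuine obstacle here; the step needing the most care is keeping the three ambient spaces distinct — $\Real^N \cong L^2_{\mu_N}$, the RKHS $\Hilbert$, and $L^2_\mu$ — and verifying that the orthogonality invoked in the Pythagorean step is the $L^2_\mu$ one. This is exactly why \cref{eq:projection} defines $P_{\X}$ as the $L^2_\mu$-projection (rather than the RKHS projection): it guarantees $f^*_\perp \perp_{L^2_\mu} a$ and $f^*_\perp \perp_{L^2_\mu} b$, which is precisely what the MSE — an $L^2_\mu$ quantity — actually sees.
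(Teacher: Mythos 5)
Your proposal is correct and takes essentially the same route as the paper: both arguments hinge on the $L^2_\mu$-orthogonality of $f^*_\perp = f^* - P_{\X}f^*$ to $\Span\curly{K(x_i,\cdot)}$ (the Pythagorean step) and on the centered noise to annihilate the remaining cross term under $\Exp_{\xivec}$. The only difference is cosmetic — the paper first applies Pythagoras to the triangle $\curly{f^*,P_{\X}f^*,\wh f_N}$ and then expands the estimation error, whereas you expand $\wh f_N - f^*$ in one pass before invoking the same two facts.
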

\begin{proof}
Since $P_{\X}f^*-\wh f_N\in\Span\curly{K(x_i,\cdot)}$, the Pythagorean theorem for the triangle $\curly{f^*,P_{\X}f^*,\wh f_N}$, yields,
\begin{align*}
 \MSE{\wh f_N, f^*} &= \norm{f^*-\wh f_N}^2=\underbrace{\norm{f^*-P_{\X}f^*}^2}_{\text{Approximation Error}}+\underbrace{\norm{P_{\X}f^*-\wh f_N}^2}_{\text{Estimation Error}}~.
 \end{align*}

Notice that the estimation error above is random, due to the randomness in $\xivec,$ which affects $\wh f_N$.
Using \Cref{eq:kernel_interpolator_decomposition}, we can further decompose the average estimation error into two error terms,
\begin{subequations}
\begin{align*}
    \Exp_{\xivec}\underbrace{\norm{P_{\X}f^*-\wh f_N}^2}_{\text{Estimation Error}} &= 
    \Exp_{\xivec}\norm{\inner{\Kmat\inv R_N{f_\perp^*}, K(X_N,\cdot)}_N + \inner{\Kmat\inv \xivec,K(X_N,\cdot)}_N}^2~, \\
    &= \underbrace{\norm{\inner{\Kmat\inv R_N{f_\perp^*},K(X_N,\cdot)}_N}^2}_{\text{Noise-free Estimation Error}} + \Exp_{\xivec}\underbrace{\norm{\inner{\Kmat\inv \xivec,K(X_N,\cdot)}_N}^2}_{\text{Noisy Estimation Error}}~.
\end{align*}
\end{subequations}
where the cross term cancels out since the noise is centered. This concludes the proof.
\end{proof}

Computing each of these terms individually, we derive the following expression for the unit circle. Recall the definition of the \textit{$N$-hop subsequences} from \Cref{eq:hop}.
\begin{lemma} For a target function $f^*= \sum_{k\in\Integer}V[k]\phi_k$, we have
\label{lem:MSE_1d}
\begin{enumerate}[label=(\alph*)]
    \item \label{lem:apx}
    Approximation error: 
    \begin{align*}
        \mc E^{\apx}:= \norm{f^* - P_{\X}f^*}^2 = \round{\sum_{i=0}^{N-1} \norm{V_i}^2- \frac{\inner{G_i,V_i}^2}{\norm{G_i}^2}}= \sum_{i=0}^{N-1} \mc E_i^{\apx}~.
    \end{align*}
    \item \label{lem:nfe}
    Noise-free estimation error:
    \begin{align*}
        \mc E^{\free}&:=\norm{\inner{\Kmat\inv R_N f^*_\perp,K(X_N,\cdot)}_N}^2 =  \sum_{i=0}^{N-1} \frac{1}{N}\round{\frac{\inner{V_i,\one}}{\inner{G_i,\one}} - \frac{\inner{G_i,V_i}}{\norm{G_i}^2}}^2 \norm{G_i}^2
        = \sum_{i=0}^{N-1} \mc E_i^{\free}~.
    \end{align*}
    \item \label{lem:nye}
    Averaged noisy estimation error: 
        \begin{align*}
        \mc E^{\noisy}:=\Exp_{\xivec} \norm{\inner{\Kmat\inv\xivec,K(X_N,\cdot)}_N}^2 = \sum_{i=0}^{N-1} \frac{\sigma^2}N \round{\frac{\norm{G_i}}{\norm{G_i}_1}}^2 = \sum_{i=0}^{N-1} \mc E_i^{\noisy}~.
        \end{align*}
\end{enumerate}
Together, this yields that the MSE for the function $f^*$ is,
\begin{align}\label{eq:MSE_full_expression_VG}
    &\Exp_{\xivec}{\MSE{\hat{f}_N,f^*}} 
    = \sum_{i=0}^{N-1} \mc E_i=\sum_{i=0}^{N-1}\mc E_i^\apx + \mc E_i^\free + \mc E_i^\noisy~,\\
    &\mc E_i := \norm{V_i}^2- \frac{\inner{G_i,V_i}^2}{\norm{G_i}^2} + \frac{1}{N} \round{\frac{\inner{V_i,\one}}{\inner{G_i,\one}} - \frac{\inner{G_i,V_i}}{\norm{G_i}^2}}^2 \norm{G_i}^2 +  \frac{\sigma^2}N \round{\frac{\norm{G_i}}{\norm{G_i}_1}}^2~.
\end{align}
\end{lemma}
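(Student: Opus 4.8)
\textbf{Proof proposal for \Cref{lem:MSE_1d}.} The three quantities are exactly the three squared norms produced by \Cref{lemma:MSE_decomp}, so the job is to evaluate each one for the grid model. The engine behind all of them is the \emph{aliasing identity} of the grid. Since $N$ is even, for $k=mN+\ell$ one has $\phi_{mN+\ell}(x_i)=e^{j(mN+\ell)(2\pi i/N-\pi)}=(-1)^\ell e^{j2\pi\ell i/N}$, which does \emph{not} depend on $m$; equivalently $R_N\phi_{mN+\ell}=(-1)^\ell\sqrt N\,\widebar{\u_\ell}$ for every $m\in\Integer$ (this is the one place the assumption that $N$ is even is used: it kills the otherwise-present sign $(-1)^{mN}$). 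I would begin by recording this, together with the companion expansion $K(x_i,\cdot)=\sum_{k}G[k]\widebar{\phi_k(x_i)}\phi_k=\sqrt N\sum_{\ell=0}^{N-1}(-1)^\ell(\u_\ell)_i\,\psi_\ell$, where $\psi_\ell:=\sum_{m}G[mN+\ell]\,\phi_{mN+\ell}\in L^2_\mu$.

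This splits everything into $N$ orthogonal ``residue blocks.'' The $\psi_0,\dots,\psi_{N-1}$ lie in mutually $L^2_\mu$-orthogonal subspaces with $\norm{\psi_\ell}^2=\norm{G_\ell}^2$, and since the matrix $\big((-1)^\ell(\u_\ell)_i\big)_{\ell,i}$ is invertible, $\Span\curly{K(x_i,\cdot)}_{i}=\bigoplus_\ell\Span\curly{\psi_\ell}$. Hence $P_{\X}$ acts blockwise as the one-dimensional $L^2_\mu$-projection onto $\psi_\ell$, i.e.\ $P_{\X}f^*=\sum_\ell\frac{\inner{f^*,\psi_\ell}}{\norm{G_\ell}^2}\psi_\ell$ with $\inner{f^*,\psi_\ell}=\inner{G_\ell,V_\ell}$, and part \ref{lem:apx} then follows at once from Parseval ($\norm{f^*}^2=\sum_\ell\norm{V_\ell}^2$) by subtracting the projected energy $\inner{G_\ell,V_\ell}^2/\norm{G_\ell}^2$ in each block. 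Two further consequences of the aliasing identity that I would extract here: $R_N\psi_\ell=\norm{G_\ell}_1(-1)^\ell\sqrt N\,\widebar{\u_\ell}$ (using $\sum_mG[mN+\ell]=\norm{G_\ell}_1$, which holds because $G\geq0$), and $R_N^*\u_\ell=\sqrt N\,(-1)^\ell\psi_{N-\ell}$ (the index flip $\ell\mapsto N-\ell$ arises because $\sum_i(\u_\ell)_i(\u_{\ell'})_i$ vanishes unless $\ell+\ell'$ is a multiple of $N$).

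For the two estimation-error terms I would start from \Cref{eq:kernel_interpolator_decomposition}, i.e.\ $\wh f_N-P_{\X}f^*=R_N^*\Kmat\inv R_N f^*_\perp+R_N^*\Kmat\inv\xivec$, which by \Cref{lemma:MSE_decomp} contribute $\norm{R_N^*\Kmat\inv R_N f^*_\perp}^2$ and $\Exp_{\xivec}\norm{R_N^*\Kmat\inv\xivec}^2$. Using $\Kmat\inv=\sum_\ell\frac{1}{N\norm{G_\ell}_1}\u_\ell\u_\ell\herm$ from \Cref{prop:eigenvectors_of_Kmat} together with $R_N^*\u_\ell=\sqrt N(-1)^\ell\psi_{N-\ell}$, one composes and reindexes to obtain, for any $z\in\Real^N$,
\begin{align*}
R_N^*\Kmat\inv z&=\frac{1}{\sqrt N}\sum_{\ell=0}^{N-1}\frac{(-1)^\ell\,\inner{\u_{N-\ell},z}}{\norm{G_\ell}_1}\,\psi_\ell,\\
\norm{R_N^*\Kmat\inv z}^2&=\frac1N\sum_{\ell=0}^{N-1}\frac{\abs{\inner{\u_\ell,z}}^2}{\norm{G_\ell}_1^2}\,\norm{G_\ell}^2 .
\end{align*}
Substituting $z=R_N f^*_\perp$ — whose blockwise coefficients are already computed from the aliasing identity and the formula for $P_{\X}f^*$ — gives the bracketed term of part \ref{lem:nfe}; substituting $z=\xivec$ and then taking $\Exp_{\xivec}$ gives part \ref{lem:nye}, since the noise is i.i.d.\ zero-mean with variance $\sigma^2$, so $\Exp_{\xivec}\abs{\inner{\u_\ell,\xivec}}^2=\sigma^2\norm{\u_\ell}^2=\sigma^2$. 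Summing the three block contributions yields \Cref{eq:MSE_full_expression_VG}.

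The genuinely delicate part is the bookkeeping of the normalization constants that must be pushed consistently through the chain $R_N\to\Kmat\inv\to R_N^*$: the $(-1)^\ell$ phase forced by the grid offset $-\pi$, the $\tfrac1{\sqrt N}$ in the DFT vectors, the factor $N$ in the eigenvalues $\lambda_\ell=N\norm{G_\ell}_1$, and the factor $N$ in $\norm{R_N\phi_k}^2$ must combine correctly. A secondary subtlety is verifying that the $L^2_\mu$-orthogonal block structure really is preserved by $P_{\X}$ — which it is precisely because each representer $K(x_i,\cdot)$ has no component outside $\bigoplus_\ell\Span\curly{\psi_\ell}$. Everything else — Parseval within each block, orthogonality of $\{\psi_\ell\}$, and the Pythagorean split already carried out in \Cref{lemma:MSE_decomp} — is routine.
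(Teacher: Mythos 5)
Your construction for parts \ref{lem:apx} and \ref{lem:nye} is sound and is essentially the paper's argument in different packaging: your $\psi_\ell$ are (up to normalization) the empirical-operator eigenfunctions of \Cref{lem:eigenfunctions_of_Kcov} used in \Cref{lemma:projection}, your aliasing identity plays the role of \Cref{lemma:coeff_inspan}, and your master formula for $\norm{R_N^*\Kmat\inv z}^2$ replaces the paper's Parseval step in \Cref{sec:noisy_proof}. The genuine gap is in part \ref{lem:nfe}: the step ``substituting $z=R_Nf^*_\perp$ gives the bracketed term of part (b)'' does not follow from the identities you yourself state -- it is off by a factor of $N$. By your aliasing identity $R_N\phi_{mN+\ell}=(-1)^\ell\sqrt N\,\overline{\u_\ell}$, the block coefficients of $z=R_Nf^*_\perp$ carry a $\sqrt N$:
\begin{align*}
R_Nf^*_\perp \;=\; \sqrt N\sum_{\ell=0}^{N-1}(-1)^\ell\Bigl(\sum_{m\in\Integer} V[mN+\ell]\;-\;\frac{\inner{G_\ell,V_\ell}}{\norm{G_\ell}^2}\norm{G_\ell}_1\Bigr)\,\overline{\u_\ell},
\end{align*}
so $\abs{\inner{\u_\ell,z}}^2=N(\cdots)^2$ (up to the reflection $\ell\mapsto N-\ell$, under which $\norm{G_\ell}$ and $\norm{G_\ell}_1$ are invariant), and the $1/N$ in your second display cancels. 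Your route therefore yields $\mc E^\free=\sum_{\ell}(\cdots)^2\norm{G_\ell}^2$ with no $1/N$, not the stated $\frac1N\sum_\ell(\cdots)^2\norm{G_\ell}^2$; as written, the proposal does not establish part (b), and you cannot reach the stated formula without silently discarding the $\sqrt N$ you correctly identified.

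It is worth recording where the tension comes from. The paper's own derivation in \Cref{sec:noisefree_proof} writes $f^*(x_i)=\sum_\ell\norm{V_\ell}_1\overline{u_{\ell i}}$, omitting exactly this $\sqrt N$ (and the $(-1)^\ell$ phase), and that omission is what produces the $1/N$ in the stated $\mc E_i^\free$; a direct check with $N=2$ and a kernel supported on $\abs{k}\le 3$ agrees with the $1/N$-free expression, so your more careful bookkeeping is in fact flagging a normalization slip in the target identity rather than committing one. Part (c) is unaffected because $\Exp_{\xivec}\abs{\inner{\u_\ell,\xivec}}^2=\sigma^2$ carries no such factor, and the main theorems only use $\mc E^\free\ge 0$ together with (a) and (c), so nothing downstream changes -- but you must either prove the cancellation you assert or state the corrected conclusion of (b). A smaller shared caveat: identifying $\sum_m V[mN+\ell]$ with $\norm{V_\ell}_1$ (as both you and the paper implicitly do) is valid only when those coefficients are nonnegative; in general the signed sum is what appears.
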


Appendices \ref{sec:approx_proof}, \ref{sec:noisefree_proof}, and \ref{sec:noisy_proof} provide proofs for \Cref{lem:MSE_1d} \ref{lem:apx}, \ref{lem:nfe}, and \ref{lem:nye} respectively.

\section{Numerical experiments}
\label{sec:expts}
We present experimental results that corroborate our theory. We visualize the effect of kernel bandwidth and regularization on the predictor and test error.
\paragraph{Effect of bandwidth on predictor}
We visualize the effect of bandwidth on kernel interpolator with the Laplace kernel in one dimension (Figure {\ref{fig:interpolation}}). On the $y$-axis we show the predicted values of our estimator (in blue) and the target function $f^*(x) = \cos(x)$ (in orange) with noise level $\sigma^2 = 1$. We notice that for small bandwidth (see $M=2$ plot) kernel interpolation resembles piecewise linear interpolation. Meanwhile, interpolation with high bandwidth converges pointwise (except on a set of measure $0$) to the $0$ function (see $M=200$ plot). Choosing an intermediate bandwidth does not recover the target function either ($M=20$).

\paragraph{Effect of bandwidth on error}We also plot the effect of bandwidth on the exact expected error predicted by our theory (Figure {\ref{fig:predicted_error}}). In this experiment, we study the predicted error of our theory using Laplace kernel interpolation with a noise level $\sigma^2=1$ on a target function $f^*(x) = \cos(x)$. We plot the approximation and noisy estimation errors. We omit the noise-free estimation error as this is typically correlated with approximation error. Our theory predicts that the optimal bandwidth is roughly $M/N = 1$, exactly the point we use to split the cases in the proofs of the main theorems. Interestingly, our theory predicts a trade-off between the approximation error and the error due to noise (noisy estimation error). Larger bandwidths $M$ allow you to fit noise benignly, at the cost of increased approximation error. Smaller bandwidths allow you to approximate well, but suffer in estimation error.

\begin{figure}
\begin{subfigure}{.48\textwidth}
  \centering
\includegraphics[width=\textwidth]{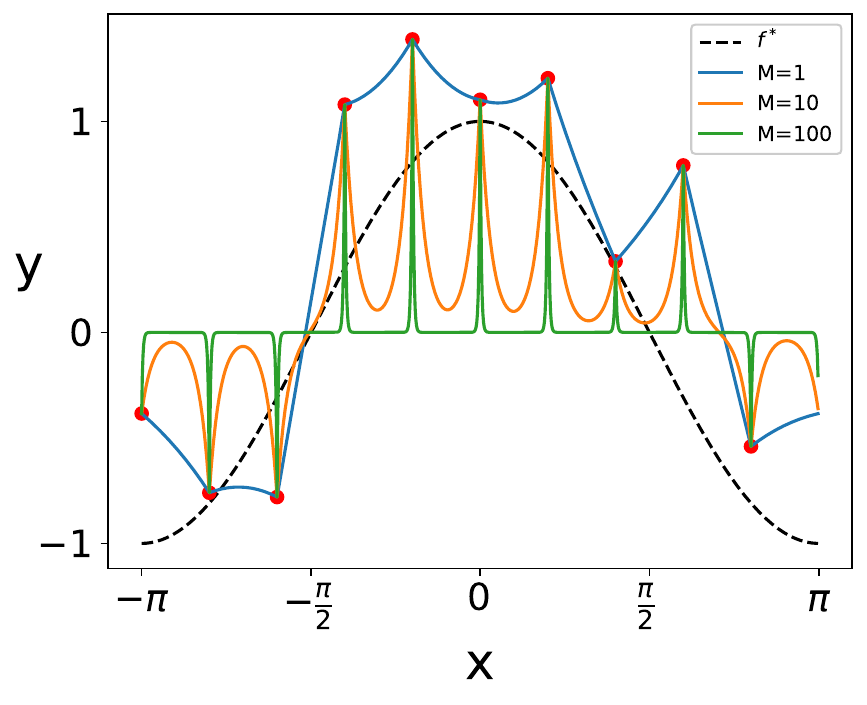}
\subcaption{Effect of kernel bandwidth $M$ on predictor $\wh f_{N,M}$. Here resolution is $N=10,$ noise variance is $ \sigma^2=0.25,$ and target function is $f^*(x)=\cos(x)$.\label{fig:interpolation}}
\end{subfigure}\hfill 
\begin{subfigure}{.48\textwidth}
  \centering
  \includegraphics[width=\textwidth]{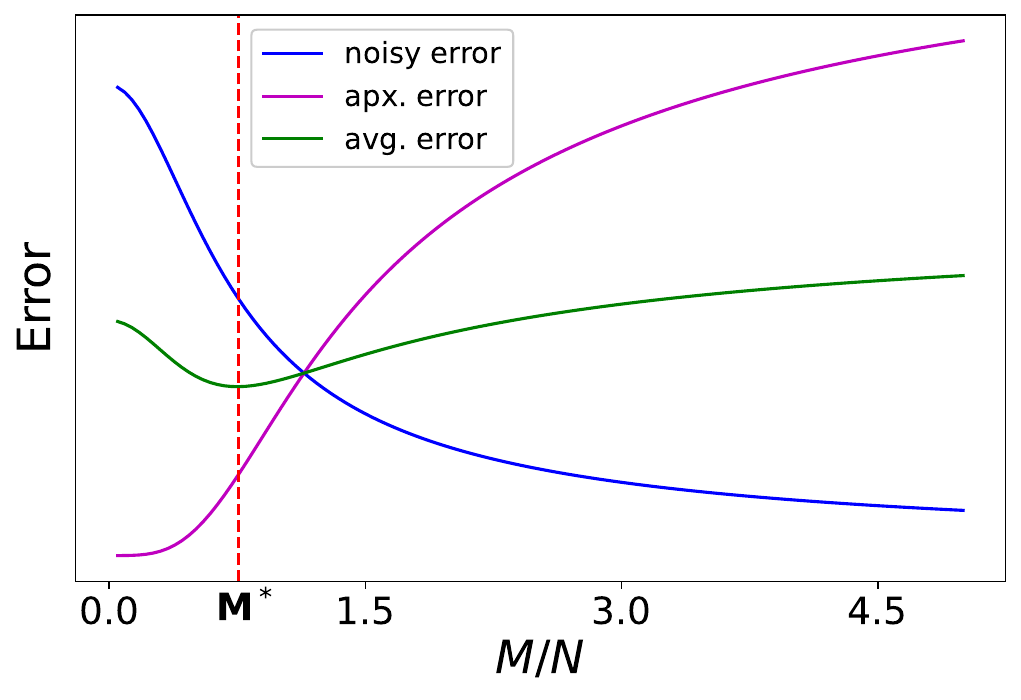}
  \subcaption{Effect of kernel bandwidth on average test error $\MSE{\wh f_{N,M},f^*}$ (in green). See (\Cref{lem:MSE_1d}).\label{fig:predicted_error} for a detailed definition of each error term in the error decomposition. Here resolution  $N=20,$ noise variance $\sigma^2=1,$ and target function is ${f^*=\cos(x)}$.}
\end{subfigure}
\caption{Effect of kernel bandwidth on (\ref{fig:interpolation}) predictor and (\ref{fig:predicted_error}) test error.}
\end{figure}

\paragraph{Effect of regularization} Standard kernel ridge regression (KRR) will prevent interpolation and enable consistent estimation of the target function. However, one can perform interpolation with a modified kernel that mimics regularization to improve generalization while continuing to interpolate. For example, we modify the laplace kernel $K$ on the unit circle to create a new kernel $\wt{K}(x,x') = K(x,x') + \lambda K(M(x-x'))$ for $M=50$ and regularization parameter $\lambda=1$. We compare this modified kernel to Laplace KRR with regularization parameters $\lambda \in \{0,1\}$ in \Cref{fig:modified_reg2}.
\begin{wrapfigure}{r}{0.5\textwidth}
\centering\includegraphics[width=0.47\textwidth]{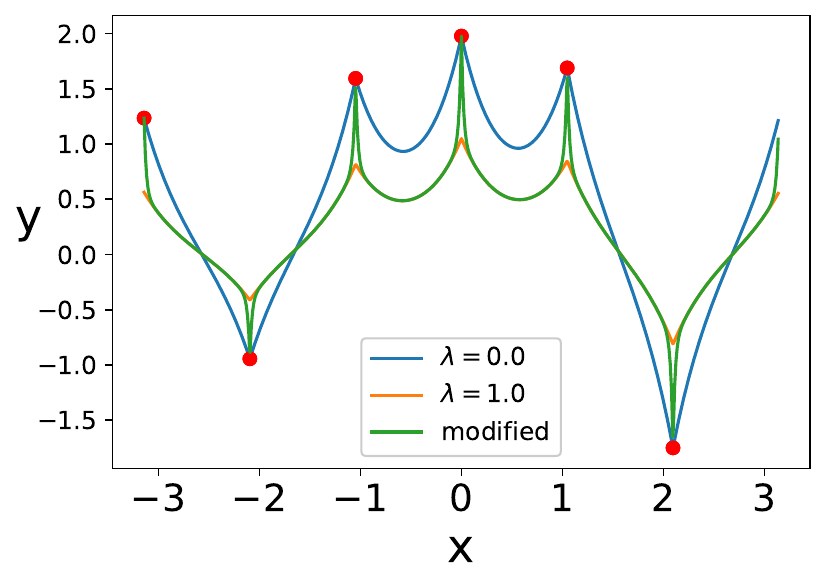}
  \caption{Modified kernel to mimic regularization (for ${f^*(x) = \cos(x)}$).\label{fig:modified_reg2}}
\end{wrapfigure}
\paragraph{Benefits of Regularization}
Our results show that positive regularization allows one to decrease the noisy estimation error at the expense of additional approximation error. Moreover, the faster the decay of the target function's Fourier coefficients, the less regularization worsens the approximation error.
To understand this, we note that adding positive regularization, in effect, adds a Dirac $\delta$-function to the kernel at the origin. In the Fourier domain, this is equivalent to adding an infinitesimal to all of the Fourier coefficients. To understand how this may help generalization, consider adding a small quantity $\Delta>0$ to each of the first $1/\Delta$ Fourier coefficients. We analyze our MSE expression in Lemma {\ref{lem:MSE_1d}}. For a fixed function, adding this $\Delta$ will have a vanishing effect on the approximation error as $\Delta \rightarrow 0$. However, adding this $\Delta$ will decrease the noisy estimation error by extending the tail of $G$, making the ratio $\|G_i\|/\|G_i\|_1$ smaller. We show how a similar modification to the Laplace kernel will cause the interpolated solution to resemble the regularized solution in \Cref{fig:modified_reg2}.

\section{Discussion and Outlook} Following the connection of wide neural networks to kernel methods \cite{JacotNTK}, the theory of kernel methods has seen a renewed interest as a tool to better understand deep neural networks \cite{belkin2018understand}. Kernel methods, being analytically more tractable than neural networks, can yield significant insights about the behavior of deep networks. However several questions remain unanswered about the behavior of kernel methods themselves.

In this paper, we investigated 
the consistency of kernel methods in fixed dimensions.
We showed that kernel interpolation, or kernel ridgeless regression, is inconsistent in fixed dimension even with adaptive bandwidth. This provides a generalization of the main result in~\cite{rakhlin2019consistency}, which considered the special case of the Laplace kernel,
to a broad class of translation-invariant kernels including the Gaussian, Lapalce, and Cauchy kernels. 

Our work suggests that infinitely-wide neural networks are inconsistent in fixed dimensions, as these networks are equivalent to kernel machines \cite{JacotNTK}. It is an interesting direction for future work if feature learning in finite-width networks \cite{RFM_paper} can enable consistency.

Further, while our result may be perceived as a negative result about kernel methods, it still leaves open the possibility of bounded inconsistency under interpolation, also called \textit{tempered overfitting} in \cite{mallinar2022benign}. It remains unclear when interpolation may be an acceptable solution concept. In any case, consistency can be enabled using appropriate regularization.

\paragraph{The Role of Data Dimension}
When the dimension of the inputs scales with the number of samples, kernel ridgeless regression can generalize \cite{liang2020just}. Our results provide additional evidence that high dimensions can dissipate the error due to noise. In particular, under our assumptions on the kernel, for the expression of the noisy estimation error (\Cref{lem:MSE_highd}\ref{lem:nye_highd}), the constants decay exponentially with dimension. This dependence was also observed in \cite{rakhlin2019consistency} for the Laplace kernel. As an additional effect, for target functions with norm that is invariant to dimension (before scaling by $(2\pi)^{-d}$), the $0$-estimator has approximation error that vanishes exponentially with dimension.
Further, to counteract the error due to noise, the bandwidth should be much larger than the data resolution in each dimension, \ie, $M>N$. However, when the dimensions grow with the number of samples, say $d=\omega(\log n)$, the resolution $N = n^{1/d} = O(1)$ in each dimension is approximately constant, and therefore the bandwidth does not need to increase with $d,n$ to satisfy $M>N$. As increasing the bandwidth in general will worsen the approximation error, the constancy of the bandwidth is a form of the blessing of dimensionality.

\section*{Acknowledgements}
We are grateful for support of the NSF, the Simons Institute for the Theory of Computing, and the Simons Foundation for the Collaboration on the Theoretical Foundations of Deep Learning\footnote{\url{https://deepfoundations.ai/}} through awards DMS-2031883 and \#814639. We also acknowledge NSF support through IIS-1815697 and the TILOS institute (NSF CCF-2112665). We also thank Sam Buchanan, Jamie Simon, Jonathan Shi, and the anonymous reviewers for useful conversations, questions, and feedback on this work.

\bibliographystyle{plain}
\bibliography{aux/ref} 

\appendix
\vspace{.25in}
\begin{center}
    \underline{\huge\sf Appendices}
\end{center}
\vspace{.1in}

\section{Proof of main result (\texorpdfstring{$d=1$}{d=1})}
\label{sec:proofs}

We prove the main results. The proof strategy is to obtain an $\Omega(1)$ lower bound on $\Exp_{\xivec}\MSE{\wh f_N,f^*}$. \Cref{eq:MSE_full_expression_VG} expressed this quantity as a sum of $N$ non-negative quantities. We show that at least $\Omega(N)$ of these quantities, $\mc E_i$, are $\Omega(1/N).$

\paragraph{Proof of \texorpdfstring{\Cref{theorem:inconsistent_all_band}}{Theorem 1}}
When $M>N$, we show that the approximation error is large in the base RKHS $\Hilbert_0$.
On the other hand, when $M\leq N$ we show that the averaged noisy estimation error has a constant lower bound.

\subparagraph{Case 1, $M \leq N$} In this case we show that the noisy estimation error is bounded away from $0$. Define,
$
\displaystyle    \Delta_{i} := \|G_{i}\|_1 - \abs{G[i]}=\sum_{m\neq 0}\abs{G[mN+i]} \geq 0
$.
Assumption \ref{ass:tail} says that for all but $o(N)$ terms corresponding to $i \in [N]$, we have,
\begin{align}
    \Delta_{i} = \sum_{m\neq0}\abs{G[mN+i]}\leq C_1\abs{G[i]}\sum_{m\neq0}\frac{1}{1+m^2}\leq4C_1\abs{G[i]}~. \label{eq:Delta_lower_bound}
\end{align} 
For such an $i$, we can lower bound the noisy estimation error term $\mc E_i^\noisy$ as,
\begin{align*}
    \mc E_i^\noisy&=\frac{\sigma^2}N\frac{\norm{G_i}^2}{\norm{G_i}_1^2}=\frac{\sigma^2}N\frac{\norm{G_i}^2}{\rbrac{\abs{G[i]} + \Delta_{i}}^2 } 
    \geq 
    \frac{\sigma^2}N\frac{\abs{G[i]}^2}{2\abs{G[i]}^2 + 2\abs{\Delta_i}^2 }
    \geq \frac{\sigma^2}{2N(1+4C_1)}\\
    \mc E^{\noisy} &= \sum_{i=1}^N\mc E^\noisy_i=\Omega(\sigma^2),
\end{align*}
since there are $\Omega(N)$ such indices $i \in [N]$ for which \Cref{eq:Delta_lower_bound} holds.

\subparagraph{Case 2, $M > N$} In this case we show the approximation error will be bounded from $0$.
Since $M > N$, by Assumption \ref{ass:head}, there exists a fixed integer $i^*$, such that $\abs{G[i^*]} \leq C_3\abs{G[N + i^*]}$. Now let $f^*(x)=\sqrt{2}\cos(i^*x)$ be the (real-valued) function with Fourier coefficients $V[i^*] = V[-i^*]= \frac{1}{\sqrt{2}}$, and $V[k] = 0$ for $|k| \neq i^*$. Using this, we can lower bound the approximation error as,
\begin{align*}
    \mc E^\apx\geq \mc E_{i^*}^\apx \geq V[i^*]^2\rbrac{1 - \frac{\frac12\abs{G[i^*]}^2}{\sum_{m \in \Integer} \abs{G[mN + i^*]}^2}} 
    &\geq  \frac12\rbrac{1 - \frac{\frac12\abs{G[i^*]}^2}{\abs{G[N+i^*]}^2 + \abs{G[i^*]}^2}} \\
    &\geq \frac{1+2C_3^2}{2+2C_3^{2}}=\Omega(1)~.
\end{align*}
The fact that $G_0[i^*]>0$ from the Assumption \ref{ass:head}, also allows us to conclude that, \begin{align*}
    \norm{f^*}_{\Hilbert_0}^2 = \frac{|V[i^*]|^2}{G_0[i^*]}+\frac{|V[-i^*]|^2}{G_0[-i^*]}<\infty.
\end{align*}

\begin{proof}[Proof of \Cref{theorem:inconsistent_all_f}]
This follows from the proof of \Cref{theorem:inconsistent_all_band}. We showed that for $M\leq N$ (Case 1 above), the noisy estimation error $\mc E^\noisy$ satisfies $\mc E^\noisy=\Omega(\sigma^2)$. Since $\mc E^\noisy$ does not involve the target function $f^*$, the statement of \Cref{theorem:inconsistent_all_f} follows.
\end{proof}

\begin{proof}[Proof of \Cref{thm:inconsistency_monotonic}]
For $M\leq N$, we know from Case 1 in the proof of \Cref{theorem:inconsistent_all_band} that $\mc E^\noisy=\Omega(1).$ For $M>N,$ we will show that $\mc E^\apx=\Omega(1)$ which sufficies to prove the claim.

Note we have $G[k]=G[-k]$. We start by the monotonic boundedness of $G$, we have
\begin{align*}
    \frac{|G[i]|^2}{\norm{G_i}^2}=\frac{|G[i]|^2}{\abs{G[i]}^2+\sum_{k\in\Integer*}\abs{G[i+kN]}^2} \leq \frac{1}{1+\upsilon}
\end{align*}
for $|i|<N/2$, where $\upsilon=C^2(C'')^2/c^2 > 0$.  

Now consider $f^* = \sum_{k \in \Integer} V[k] \phi_k$, and suppose $\|f^*\|^2=\sum_{k\in\Integer}\abs{V[k]}^2=1$. As $f^*$ has square-integrable zeroth and first derivatives, its Fourier series coefficients have decay $|V[i]| \leq B/(1+i^2)$ for all $i$, for a sufficiently large constant $B$. This implies for all $N>0$,
\begin{align}
    \sum_{|i| > N/2} |V[i]| &< \frac{4B}{N},\qquad\text{and}\label{eq:ub_V}\\
    \sum_{|i| < N/2}|V[i]|^2&\geq 1-\frac{48B^2}{N^3}.\label{eq:lb_V}
\end{align}
One can show by contradiction (to equation \eqref{eq:lb_V}) -- for some $\eps\in[0,\frac12]$ there exists a subset $S_\eps\subset \set{i:\abs{i}<N/2}$ such that $|S_\eps|=\Omega(N^{2\eps})$ and $|V[i]|\geq \Omega(N^{-\eps})$ for all $i\in S_\eps$.
For any such $\eps$, consider the following set of inequalities for $i\in S_\eps$,
\begin{align*}
    \mc E^\apx_i &=\norm{V_i}^2 - \frac{\inner{G_i,V_i}^2}{\norm{G_i}^2}\\
    &\geq \norm{V_i}^2 - \frac{|G[i]|^2}{\norm{G_i}^2}\round{\sum_{k\in\Integer}|V[i+kN]|}^2\geq \norm{V_i}^2 - \frac{1}{1+\upsilon}\round{\sum_{k\in\Integer}|V[i+kN]|}^2\\&\geq |V[i]|^2 - \frac{1}{1+\upsilon}\round{\sum_{k\in\Integer}|V[i+kN]|}^2
    =|V[i]|^2- \frac{1}{1+\upsilon}\round{|V[i]|+\sum_{k\in\Integer*}|V[i+kN]|}^2\\ 
    &\geq|V[i]|^2- \frac{1}{1+\upsilon}\round{|V[i]|+\frac{4B}{N}}^2
    = |V[i]|^2\round{1 - \frac{1}{1+\upsilon}\round{1+\frac{4B}{N|V[i]|}}^2} \\
    &\geq \Omega(N^{-2\eps})\round{1 - \frac{\round{1+4B\cdot O(N^{\eps-1})}^2}{1+\upsilon}}
\end{align*}
Since $\eps -1 < 0$ due to the range of $\eps$, the term in the inner parenthesis always approaches 1 for large enough $N.$ Hence we have,
\begin{align}
    \mc E^\apx =\sum_{i=1}^N \mc E^\apx_i \geq \sum_{\substack{i\in S_\eps\\ i> 0}}^N \mc E^\apx_i + 
    \sum_{\substack{i\in S_\eps\\ i<0}}^N \mc E^\apx_{N-i} \geq \Omega(N^{-2\eps})\cdot|S_\eps|=\Omega(1).
\end{align}
This proves the claim.
\end{proof}

\section{Decomposition of MSE: Proof of \texorpdfstring{\Cref{lem:MSE_1d}}{}}

Appendices \ref{sec:approx_proof}, \ref{sec:noisefree_proof}, and \ref{sec:noisy_proof} provide proofs for \Cref{lem:MSE_1d} \ref{lem:apx}, \ref{lem:nfe}, and \ref{lem:nye} respectively. Recall that $P_\X$ is the $L^2$ projection operator onto $\text{span}\round{\set{K(x_i,\cdot)}}$

\subsection{Approximation error: Proof of \texorpdfstring{\Cref{lem:MSE_1d}\ref{lem:apx}}{}}
\label{sec:approx_proof}

The proof proceeds by applying the Pythagorean theorem to the triangle $\curly{0,f^*,P_{\X}f^*}$ in $L^2_\mu$. The following lemma gives exact expressions for projection of the target function and its norm. 

\begin{lemma}[Projection] For $f^*=\sum_{k\in\Integer}V[k]\phi_k$
\label{lemma:projection}
\begin{align*}
    P_{\X}f^* = \sum_{\ell=0}^{N-1} \frac{\inner{V_\ell,G_\ell}}{{\norm{G_\ell}^2}} \sum_{m \in \Integer} G[mN + \ell] \phi_{mN + \ell},\qquad\text{and}\qquad
    \norm{P_{\X}f^*}^2 = \sum_{\ell=0}^{N-1} \frac{\abs{\inner{V_\ell,G_\ell}}^2}{\norm{G_\ell}^2}
\end{align*}
\end{lemma}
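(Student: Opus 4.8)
The plan is to exploit the \emph{aliasing} structure of the grid: on $N$ equally spaced points the Fourier modes $\phi_k$ can only be distinguished within a residue class modulo $N$, so the span of the representers is spanned by $N$ mutually orthogonal ``folded'' functions, and the projection reduces to a coordinate-wise computation. First I would expand each representer in the Fourier basis, writing $K(x_i,\cdot)=\sum_{k\in\Integer}G[k]e^{-jkx_i}\phi_k$. Plugging in $x_i=\tfrac{2\pi}{N}i-\pi$ and using that $N$ is even, one checks $e^{-jkx_i}=e^{j\ell\pi}e^{-2\pi j\ell i/N}$ where $\ell:=k\bmod N$, so this coefficient depends on $k$ only through $\ell$. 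Hence, defining the folded functions
\[
\psi_\ell:=\sum_{m\in\Integer}G[mN+\ell]\,\phi_{mN+\ell},\qquad \ell\in[N],
\]
we obtain $K(x_i,\cdot)=\sum_{\ell=0}^{N-1}e^{j\ell\pi}e^{-2\pi j\ell i/N}\,\psi_\ell$.

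Next I would record two elementary facts. First, the $\psi_\ell$ have pairwise disjoint Fourier support, so by Parseval (\Cref{thm:parseval_continuous}) they are orthogonal in $L^2_\mu$ with $\norm{\psi_\ell}^2=\sum_m|G[mN+\ell]|^2=\norm{G_\ell}^2$; moreover $\norm{G_\ell}>0$ for every $\ell$, since $\Kmat$ is invertible with eigenvalues $N\norm{G_\ell}_1$ by \Cref{prop:eigenvectors_of_Kmat}, and $\norm{G_\ell}_1>0\iff\norm{G_\ell}>0$ because $G\geq 0$. Second, the matrix $\big(e^{j\ell\pi}e^{-2\pi j\ell i/N}\big)_{i,\ell}$ equals $\sqrt{N}\,\U\cdot\mathrm{diag}(e^{j\ell\pi})_\ell$, a product of the invertible matrices $\sqrt{N}\,\U$ (since $\U\U\herm=\I$) and an invertible diagonal matrix, hence is itself invertible. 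Solving this linear system for the $\psi_\ell$ in terms of the $K(x_i,\cdot)$ gives $\Span\curly{K(x_i,\cdot)}_{i=0}^{N-1}=\Span\curly{\psi_\ell}_{\ell=0}^{N-1}$.

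Then, since $f^*=\sum_k V[k]\phi_k\in L^2_\mu$ and $\curly{\psi_\ell}$ is an orthogonal basis of the finite-dimensional (hence closed) subspace onto which $P_{\X}$ projects, orthogonal projection gives $P_{\X}f^*=\sum_{\ell=0}^{N-1}\tfrac{\inner{f^*,\psi_\ell}}{\norm{\psi_\ell}^2}\,\psi_\ell$. Computing $\inner{f^*,\psi_\ell}=\sum_m V[mN+\ell]\,\overline{G[mN+\ell]}=\inner{G_\ell,V_\ell}$ (using that $G$ is real) and $\norm{\psi_\ell}^2=\norm{G_\ell}^2$ yields the stated formula for $P_{\X}f^*$; applying Parseval once more to this orthogonal expansion gives $\norm{P_{\X}f^*}^2=\sum_{\ell=0}^{N-1}\inner{G_\ell,V_\ell}^2/\norm{G_\ell}^2$.

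I expect the only genuine subtlety to be the span identity: verifying cleanly that aliasing collapses the $N$ representers exactly onto the $N$ folded modes $\psi_\ell$ via invertibility of the phase-twisted DFT matrix, together with the bookkeeping ($\Kmat$ invertible, $G\geq 0$) that guarantees $\norm{G_\ell}>0$ so the projection coefficients are well defined. Everything else is routine orthogonal-projection and Parseval computation.
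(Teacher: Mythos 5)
Your proof is correct, and it reaches the paper's formula by the same final computation but via a different supporting argument for the key structural fact. The paper obtains the orthogonal system $\psi_\ell \propto \sum_{m}G[mN+\ell]\phi_{mN+\ell}$ and the identity $\Span\curly{K(x_i,\cdot)}=\Span\curly{\psi_\ell}$ by invoking \Cref{lem:eigenfunctions_of_Kcov}: the $\psi_\ell$ are exhibited as eigenfunctions of the empirical operator $\mc T_K^N$, which in turn rests on \Cref{lemma:eigvectors_covariance} (eigenfunctions of $\mc T_K^N$ correspond to eigenvectors of $\Kmat$), \Cref{lemma:coeff_inspan}, and \Cref{prop:eigenvectors_of_Kmat}. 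You instead expand each representer in the Fourier basis, use the aliasing identity $e^{-jkx_i}=e^{j\ell\pi}e^{-2\pi j\ell i/N}$ (valid since $N$ is even) to collapse the expansion onto the folded modes, and get the span identity from invertibility of the phase-twisted DFT matrix $\sqrt{N}\,\U\,\mathrm{diag}(e^{j\ell\pi})$; orthogonality of the $\psi_\ell$ and $\norm{\psi_\ell}^2=\norm{G_\ell}^2$ then follow from disjoint Fourier support, exactly as in the paper, and the projection and norm formulas are the same coordinate-wise computation plus \Cref{thm:parseval_continuous}. Your route is more elementary and self-contained -- it never needs the empirical operator or its eigen-theory -- and it makes explicit the bookkeeping that $\norm{G_\ell}>0$ (via invertibility of $\Kmat$ and $\lambda_\ell=N\norm{G_\ell}_1$), which the paper leaves implicit; the paper's route is economical in context because the eigenfunction machinery for $\mc T_K^N$ is developed anyway and reused elsewhere. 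One minor point to state explicitly if you write this up: justify interchanging the sum over $k$ with the regrouping by residue classes, which follows from $G\in\ell^1(\Integer)$ (Assumption \ref{ass:scale}).
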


We get,
\begin{align*}
    \norm{f^* - P_{\X}f^*}^2 
    = \norm{f^*}^2 - \norm{P_{\X}f^*}^2 
    = \norm{V}^2 - \sum_{\ell=0}^{N-1} \frac{\inner{V_\ell,G_\ell}^2}{\norm{G_\ell}^2}
\end{align*}

\begin{proof}[Proof of Lemma \ref{lemma:projection}]
Note that \Cref{lem:eigenfunctions_of_Kcov} shows that $\curly{\frac{\psi_\ell}{\|\psi_\ell\|}}_{\ell=0}^{N-1}$ is an orthonormal basis for $\Span\{K(x_{\ell},\cdot)\}$. Consequently, we have \begin{align*}
    P_{\X}f^* = \sum_{\ell=0}^{N-1} \inner{f^*, \frac{\psi_\ell}{\|\psi_\ell\|}} \frac{\psi_\ell}{\|\psi_\ell\|},\qquad\text{and}\qquad
    \norm{P_{\X}f^*}^2 = \sum_{\ell=0}^{N-1}\inner{f^*, \frac{\psi_\ell}{\|\psi_\ell\|}}^2
\end{align*} 
We compute these projections below. For $\ell \in [N]$,
\begin{align*}
    \sqrt{\norm{G_\ell}_1}\inner{f^*, \psi_\ell} &=  \inner{ \sum_{k \in \Integer} V[k] \phi_k , \sum_{m \in \Integer} G[mN + \ell]\phi_{mN + \ell}} \\
    &=  \sum_{m,k \in \Integer} G[mN + \ell] V[k] \indicator{k=mN + \ell }\\
    &=  \sum_{m \in \Integer} G[mN + \ell] V[mN + \ell]=\inner{V_\ell,G_\ell}
\end{align*}
Thus, we get that,
\begin{align*}
    \inner{f^*, \frac{\psi_\ell}{\|\psi_\ell\|}} \frac{\psi_\ell}{\|\psi_\ell\|} = \frac{\inner{V_\ell,G_\ell}}{\norm{G_\ell}^2}\sum_{m\in\Integer}G[mN+\ell]\phi_{mN+\ell}
\end{align*}
The claims follow immediately.
\end{proof}

\subsection{Noise-free estimation error: Proof of \texorpdfstring{\Cref{lem:MSE_1d}\ref{lem:nfe}}{}}
\label{sec:noisefree_proof}

Let $E$ be the fourier series of $\inner{\Kmat\inv R_N\curly{f^*-P_{\X}f^*},K(X_N,\cdot)}$. From \Cref{lemma:coeff_inspan} we have,
\begin{align*}
    E[k] = \sqrt{N}R_N \cbrac{f^* - P_{\X}f^*}\tran\Kmat\inv\u_{k\mod N} \cdot G[k]
\end{align*}
By Parseval's theorem (\Cref{thm:parseval_continuous}), we conclude,
\begin{align*}
    \frac{1}{2\pi} \int_{-\pi}^\pi \rbrac{\inner{\K\inv R_N \cbrac{f^* - P_{\X}f^*},K(X_N,t)}_N}^2\dif t = \sum_{k \in \Integer} |E[k]|^2
\end{align*}
First, we show that
\begin{align*}
    {R_N\cbrac{f^* - P_{\X}f^*}\tran \K\inv\u_{\ell}}
    &= \rbrac{\frac{\inner{V_i,\one}}{\inner{G_i,\one}} - \frac{\inner{G_\ell , V_\ell}}{\|G_\ell\|^2}}
\end{align*}

From \Cref{prop:eigenvectors_of_Kmat} we have $\Kmat\inv\u_{\ell} = \u_{\ell}\cdot \frac{1}{N\norm{G_{\ell}}_1}$.
Thus by \Cref{lemma:projection}, we can write $P_{\X}f^*$ on the data as
\begin{align*}
    P_{\X}f^*(x_i) &= \sum_{\ell=0}^{N-1} \frac{\inner{G_\ell,V_\ell}}{\norm{G_\ell}^2} \sum_{m\in\Integer}G[mN+\ell] \phi_{mN+\ell}(x_i)= \sum_{\ell=0}^{N-1} \frac{\inner{G_\ell,V_\ell}}{\norm{G_\ell}^2} \norm{G_\ell}_1 \wb{u_{\ell i}}\\
    f^*(x_i)&=\sum_{k\in\Integer}V[k]\phi_k(x_i)=\sum_{\ell=0}^{N-1}\inner{V_i,\one}\wb{u_{\ell i}}
\end{align*}
We thus have
\begin{align*}
    R_N \cbrac{f^* - P_{\X}f^*}\tran\Kmat\inv\u_{\ell} &=  \sum_{i,\ell'=0}^{N-1}\round{\inner{V_{\ell'},\one}-\frac{\inner{G_{\ell'},V_{\ell'}}}{\norm{G_{\ell'}}^2}\norm{G_{\ell'}}_1} \frac{\wb{u_{\ell' i}}u_{\ell i}}{N\norm{G_\ell}_1} \\
    &=
    \frac{1}{N}\round{\frac{\inner{V_{\ell},\one}}{\inner{G_{\ell},\one}}-\frac{\inner{V_\ell,G_\ell}}{\norm{G_\ell}^2}}
\end{align*}
This gives,
\begin{align*}
    \sum_{k\in\Integer}\abs{E[k]}^2=\frac{1}{N}\sum_{\ell=0}^{N-1}\round{\frac{\inner{V_{\ell},\one}}{\inner{G_{\ell},\one}}-\frac{\inner{V_\ell,G_\ell}}{\norm{G_\ell}^2}}^2\norm{G_\ell}^2.
\end{align*}

\subsection{Noisy estimation error: Proof of \texorpdfstring{\Cref{lem:MSE_1d}\ref{lem:nye}}{}}
\label{sec:noisy_proof}

We derive this by an application of Parseval's theorem.
Define the Fourier series,
\begin{align*}
    \inner{\Kmat\inv\xivec,K(X_N,t)}_N = \sum_{k\in\Integer} \wt E[k] e^{j k t}
\end{align*}
By \Cref{thm:parseval_continuous} (Parseval's theorem), we have, 
\begin{subequations}
\begin{align*}
    &\Exp_{\xivec} \sbrac{ \frac{1}{2\pi} \int_{-\pi}^{\pi} |\inner{\Kmat\inv\xivec,K(X_N,t)}_N|^2 \dif t } = \sum_{k\in\Integer} \Exp_{\xivec} |\wt E[k]|^2 
    = \sum_{i=0}^{N-1} \sum_{m \in \Integer} \Exp_{\xivec} \abs{\wt E[mN + i]}^2 \\
    &\overset{\rm (a)}= \sum_{i=0}^{N-1} \sum_{m \in \Integer} \left|G[mN+i]\right|^2 \Exp_{\xivec} \left|\xivec\tran\Kmat\inv\u_i \right|^2\cdot N 
    \overset{\rm (b)}= \sigma^2 \sum_{i=0}^{N-1} \norm{G_i}^2  \left( \u_i\herm\Kmat^{-2} {\u}_i  \right)\cdot N \\
    &\overset{\rm (c)}= {\sigma^2} \sum_{i=0}^{N-1}\norm{G_i}^2\frac{1}{N^2\norm{G_i}_1^2} N = {\sigma^2} \sum_{i=0}^{N-1}\frac{\norm{G_i}^2}{N\norm{G_i}_1^2}
\end{align*}
\end{subequations}
where we have used \Cref{lemma:coeff_inspan} in (a), and \Cref{lemma:noise_dft} in (b), and \Cref{prop:eigenvectors_of_Kmat} in (c).
\section{Main Results for \texorpdfstring{$d>1$}{d>1}}
\label{sec:main_results_highd}

We can perform a similar analysis for $d>1.$ To generalize the main results, we also generalize Assumptions \ref{ass:scale}-\ref{ass:head} for the kernel to Assumptions \ref{ass:scale_highd}-\ref{ass:head_highd}
for dimensions greater than one, as well as the monotonicity condition (\Cref{con:monotonic_highd}). Under these assumptions, we show the main results hold.

\begin{theorem}[Inconsistency for all Functions (when $G$ is monotonically bounded)]
\label{thm:inconsistency_monotonic_highd}
Consider a fixed non-zero regression function $f^*$ (i) in the Sobolev space of order $\frac{d}{2}+1$ (i.e. whose derivatives of orders $\alpha \in \Integer_{\geq 0}^d$ for $\|\alpha\|_1 \leq \frac{d}{2}+1$ are square-integrable) and (ii) that can be expressed as a convergent Fourier series. Then, interpolation with 
a real-valued translation-invariant kernel satisfying \Cref{con:monotonic_highd} 
is inconsistent for $f^*$, for any bandwidth, even if chosen adaptively.
\end{theorem}

See \cite{lieb2001analysis} for a definition of an order $\alpha$ derivative. 

\begin{theorem}[Inconsistency for all Bandwidths]
\label{theorem:inconsistent_all_band_highd}
For any translation-invariant kernel satisfying Assumptions \ref{ass:scale_highd}-\ref{ass:head_highd}, there exists a function with constant $\mathcal{H}_0$-norm for which kernel interpolation will be inconsistent for any bandwidth, even adaptive to the data set.
\end{theorem}

\begin{theorem}[Inconsistency for all Functions]
\label{theorem:inconsistent_all_f_highd}
For any translation-invariant kernel satisfying Assumptions \ref{ass:scale_highd}-\ref{ass:tail_highd}, with a bandwidth $M \leq N$, kernel interpolation will be inconsistent for all target functions that can be expressed as  convergent Fourier series. In particular, kernel interpolation with any fixed bandwidth will be inconsistent for all such functions.
\end{theorem}

Further, these results hold for the Gaussian, Laplace, and Cauchy kernels. 

\begin{proposition}
\label{lemma:ex_kernels_inconsistent_highd}
The Gaussian $G[k] = \exp(-\frac{\|\k\|^2}{M^2})$, Laplace $G[\k] = \round{1 + \frac{\|\k\|^2}{M^2}}^{-\frac{d+1}{2}}$, and Cauchy $G[\k] = \exp(-\frac{\|\k\|}{M})$ kernels (wrapped on the unit circle) satisfy \Cref{con:monotonic_highd}.  
\end{proposition}

\newpage
\begin{center}
    {\sf \underline{\huge Supplementary materials}: \\[1ex]
    \Large On the Inconsistency of Kernel Ridgeless Regression in Fixed Dimensions}
\end{center}\vspace{5ex}
\section{Extending proofs to higher dimensions}
\label{app:highd}
Proofs missing from this section are provided in the supplementary materials.
\paragraph{Notation} In this section $d\geq 1$ and $n=N^d$. By $[N]^d$ we denote the $d$-fold Cartesian product of $[N]:=\set{0,1,\ldots,N-1}$. For vectors $\p,\q$, we write $\p \leq \q$ to indicate a coordinate-wise inequality, \ie, for all coordinates $i$, we have $p_i \leq q_i$. Similarly, $\p \nleq \q$ indicates $\p\leq\q$ is violated, \ie, there exists a coordinate $i$ for which $p_i > q_i$. We similarly define $\p \geq \q$ and $\p \ngeq \q.$ We also denote $\zero$ and $\one$ to be the vectors of all 0's and all 1's respectively in a dimension compatible with the expression. For a scalar $C$, the expression $\p \leq C$ means $\p\leq C\cdot\one,$ and similarly $\p \geq C, \p \nleq C, \p \ngeq C$.

We consider sequences indexed by $\Integer^d$, and for such sequences we extend the definition of $N-$ \textit{hop subsequences} from \Cref{eq:hop} in the following manner. For a fixed $N\in\Natural,$ and a sequence $G\in\ell^1(\Integer^d)$, and for $\l\in[N]^d$, let  
\begin{align*}
G_\l \in\ell^1(\Integer^d)\qquad G_{\l}[\m] = G[\m N + \l],\qquad \forall\,\m\in\Integer^d
\end{align*}
be the \textit{$N$-hop subsequence} with entries given as above.
For $\k \in \Integer^d$, and $\x\in\Real^d$
\begin{align}\label{eq:mod_highd}
    \k \mod N &:= (k_1 \text{ (mod } N),~ k_2 \text{ (mod } N),~ \ldots,~ k_d \text{ (mod } N)) \in [N]^d\\
    \x \mod \unitcircle &:= (x_1 \text{ (mod } \unitcircle),~ x_2 \text{ (mod } \unitcircle),~ \ldots,~ x_d \text{ (mod } \unitcircle)) \in \unitcircle^d
\end{align}
where we remind the reader of notation from \Cref{eq:def:mod_unitcircle}.
We denote by $\unitcircle^d$ the Cartesian product of $d$ unit circles $\unitcircle$ along each dimension. We refer to this as the unit torus.

\begin{definition}[Fourier basis]
For $k\in\Integer^d$ and $\x\in\unitcircle^d$, define $\phi_\k(\x) :=  \exp\round{j\inner{\k,\x}}=\prod_{i=1}^d \exp(jk_ix_i)$. This basis satisfies $\inner{\phi_\k,\phi_\l}=\frac1{(2\pi)^d}\int_{\unitcircle^d}\exp(j\inner{\k-\l,\x})\dif\x=\indicator{\k=\l}$.
\end{definition}

A target function defined on the unit torus admits a Fourier series, 
\begin{align}\label{eq:target_fn_highd}
f^* = \sum_{\k \in \mathbb{Z}^d} V[\k] \phi_{\k}    \qquad V[\k]=\inner{f^*,\phi_\k}.
\end{align}

\begin{definition}[DFT Matrix \texorpdfstring{$d>1$}{d>1}]\label{def:dft_d>1}
The normalized DFT matrix in $d>1$ is 
\begin{align*}
\U_d = \begin{bmatrix}
    \u_{\zero}   & \cdots & \u_{(N-1)\one}
\end{bmatrix}\in\Complex^{N^d\times N^d},
\qquad 
\u_{\l,\p}  := N^{-\nicefrac{d}{2}} \exp\round{-j \frac{2\pi}N\inner{\l,\p}},\quad \l,\p \in [N]^d
\end{align*}
\end{definition}

\paragraph{Data distribution}
For $d>1$, the continuous distribution is $\mu=\text{Uniform}(\unitcircle^d)$ and the discrete distribution over $\x\in\unitcircle^d$, with $n=N^d$ samples, to be
\begin{align*}
    \mu_n(\x):=\frac1{N^d}\sum_{\ell\in[N]^d}\delta(\x-\x_\l),\qquad (\x_\l)_i = \frac{2\pi}{N}\ell_i - \pi,\qquad \forall\,\l\in[N]^d, \text{ and } \forall\, i\in[N].
\end{align*}
The $n=N^d$ samples $\{\x_\l\}$ are indexed by elements of $[N]^d$, where $\x_\l\in\unitcircle^d$ and has coordinates given by the expression above. Note again that $\mu_{n}$ weakly converges to $\mu.$ In the rest of this section we use
\begin{align*}
    \inner{\alphavec,K(X_n, \cdot)}_n := \sum_{\p \in [N]^d} \alpha_\p K(\x_\p,\cdot)
\end{align*}
to keep the notation simple.

\paragraph{Translation-invariant kernels}

As in $d=1$, we can define translation-invariant kernels with the following property,
\begin{align*}
    K(\x,\x') = g\round{M(\x-\x' \mod \unitcircle)},\qquad \x,\x' \in \unitcircle^d
\end{align*}
for some even function $g: \mathbb{R}^d \rightarrow \mathbb{R}$ (see Definition \eqref{eq:mod_highd}). 

Define $G_0,G:\Integer^d \rightarrow \Complex$ as the Fourier series, \ie
\begin{subequations}
\begin{align}\label{eq:FS_of_g_highd}
    &g(M(\bm{\theta} \mod \unitcircle)) = \sum_{\k \in\Integer^d} G[\k] \exp(j \inner{\k,\bm{\theta}})  \\
    &G[\k] = \frac{1}{(2\pi)^d}\int_{\unitcircle^d} g(M\bm{\theta})\exp(-j \inner{\k,\bm{\theta}})\dif \bm{\theta}
\end{align}
\end{subequations}
Note that while usually the bandwidth scales the input (as detailed here), our analysis also holds for different mechanisms.

As in $d=1$, for positive definite kernels, $g$ is an even function whereby we have,
\begin{align*}
    G[\k] = G[-\k]\geq 0 \quad \forall \k \in \Integer^d
\end{align*}

\subsection{MSE Decomposition}
We start with a result analogous to \Cref{lemma:MSE_decomp}, when $d>1.$
\begin{lemma}[Decomposition of MSE for \texorpdfstring{$d>1$}{d>1}] For a target function $f^*= \sum_{\k \in\Integer^d}V[\k]\phi_{\k}$,
\label{lem:MSE_highd}
\begin{enumerate}[label=(\alph*)]
    \item \label{lem:apx_highd}
    Approximation error: 
    \begin{align*}
        \displaystyle\mc E^{\apx}:= \norm{f^* - P_{\X}f^*}^2 = \sum_{\p \in [N]^d} \norm{V_\p}^2- \frac{\inner{G_\p,V_\p}^2}{\norm{G_\p}^2} = \sum_{\p \in [N]^d} \mc E_\p^{\apx}
    \end{align*}
    \item \label{lem:nfe_highd}
    Noise-free estimation error:
    \begin{align*}
        \mc E^{\free}&:=\norm{\inner{\Kmat\inv R_n\curly{f^* - P_{\X}f^*},K(X_n,\cdot)}_n}^2 =  \sum_{\p \in [N]^d} \frac{\norm{G_\p}^2}{N^d}\round{\frac{\inner{V_{\p},\one}}{\inner{G_{\p},\one}} - \frac{\inner{G_\p,V_\p}}{\norm{G_\p}^2}}^2 \\
        &= \sum_{\p \in [N]^d} \mc E_\p^{\free}
    \end{align*}
    \item \label{lem:nye_highd}
    Averaged noisy estimation error: 
    \begin{align*}
        \mc E^{\noisy}:=\Exp_{\xivec} \norm{\inner{\Kmat\inv\xivec,K(X_n,\cdot)}_n}^2 = \sum_{\p \in [N]^d} \frac{\sigma^2}{N^d} \round{\frac{\norm{G_\p}}{\norm{G_\p}_1}}^2 = \sum_{\p \in [N]^d} \mc E_\p^{\noisy}
    \end{align*}
        
\end{enumerate}
Together, this yields that the MSE for this function is,
\begin{align}\label{eq:MSE_full_expression_VG_highd}
    \Exp_{\xivec}{\MSE{\hat{f}_N,f^*}} 
    &= \sum_{\p \in [N]^d} \mc E_\p=\sum_{\p \in [N]^d} \mc E_\p^\apx + \mc E_\p^\free + \mc E_\p^\noisy\\
    \mc E_\p := \norm{V_\p}^2 &- \frac{\inner{G_\p,V_\p}^2}{\norm{G_\p}^2} + \frac{\norm{G_\p}^2}{N^d} \round{\frac{\inner{V_{\p},\one}}{\inner{G_{\p},\one}} - \frac{\inner{G_\p,V_\p}}{\norm{G_\p}^2}}^2 
    +  \frac{\sigma^2}{N^d} \round{\frac{\norm{G_\p}}{\norm{G_\p}_1}}^2
\end{align}
\end{lemma}

The derivation for $d>1$ is similar to the $d=1$ case. 
\Cref{sec:approx_proof_highd}, \Cref{sec:noisefree_proof_highd}, and \Cref{sec:noisy_proof_highd} in the supplementary materials provide proofs for \Cref{lem:MSE_highd} \Cref{lem:apx_highd}, \Cref{lem:nfe_highd}, and \Cref{lem:nye_highd} respectively.

\subsection{Spectral Assumptions}

We assume spectral conditions for kernels in $d>1$ that are analogous to those in $d=1$. 

\begin{assumption}[Integrability]
\label{ass:scale_highd}
We assume the kernel is integrable. In particular, the integral ${\int_{\unitcircle^d} g(M\x) \dif \x}$ exists and is finite for all $0<M< \infty$.
\end{assumption}

\begin{assumption}[Spectral Tail]
\label{ass:tail_highd}
For all $\k \in \Integer^d_{\geq 0}$, there exists a dimension-dependent constant $C_{1,d} > 0$ such that,
\begin{align}\label{eq:tail_assumption_highd}
    |G[\k M^\prime  + \p]| \leq C_{1,d} |G[\p]| \round{1 + \|\k\|^2}^{-\frac{d+1}{2}}
\end{align}
holds for all $M' \geq M$ and for all $0 \leq \p \leq M'$, except $o_{M'}((M')^d)$ many.
\end{assumption}

\begin{assumption}[Spectral Head]
\label{ass:head_highd}
There exist dimension-dependent constants $C_{2,d}, C_{3,d}\in\Real_+$, $\p^*\in\Integer_{\geq 0}$, and $\zero\leq \m^* \leq \one$ with $\m^* \neq 0$, such that for $M\geq C_{2,d}$, we have that for all $M'\leq M$, $|G[\p^*]| \leq C_{3,d} |G[\p^* + M'\m^*]|$  and $\abs{G_0[\p^*]}>0$.
\end{assumption}

We give a condition that implies all three of these assumptions:
\begin{condition}[Monotonic Boundedness ($d>1$)]
\label{con:monotonic_highd}
There exist constants (that are independent of the bandwidth $M$) $c_d,C_d,C'_d,C''_d>0$, a constant $c'_d(M)$ (that may depend on $M$) and a monotonically decreasing function $f(\|\x\|)$ with (i) $0 < \frac{f(\|\x+\k\|)}{f(\|\x\|)} \leq C''_d \round{1 + \|\k\|^2}^{-\frac{d+1}{2}}$ for all $x \in \Real^d_{\geq 0}$,$\k \in \Integer_{\geq 0}^d$, and (ii) $\frac{f(\|\e_i + \x\|)}{f(\|\x\|)} \geq C'_d$ for $\zero \leq \x \leq \one$ and standard basis vectors $\e_i$ for $i \in [d]$, such that $c_d f(\|\k\|/M) \leq G[\k]/c'_d(M) \leq C_d f(\|\k\|/M)$ for all $\k \in \Integer^d_{\geq0}$.
\end{condition}

With the assumptions defined, we can prove the main results for $d>1$.

\subsection{Proof of \texorpdfstring{\Cref{theorem:inconsistent_all_band_highd}}{}}

When $M\leq N$ we show that the averaged noisy estimation error is large.
On the other hand, when $M>N$, we show that the approximation error is large for a cosine function in the base RKHS $\Hilbert_0$.

\paragraph{Case 1, \texorpdfstring{$M > N$}{M>N}} In this case we show the approximation error is bounded away from $0$.
Since $M > N$, by \Cref{ass:head_highd}, there exists a vector $\p^*$ of constant integers, and an $\zero \leq \m^* \leq \one$ with $\m^* \neq 0$, such that $\abs{G[\p^*]} \leq C_{3,d} \abs{G[\m^* N + \p^*]}$. Now let $f^*$ be the (real-valued) function with Fourier coefficients $V[\p^*] = V[-\p^*]= \sqrt{\frac{(2\pi)^{d-1}}{2}}$, and $V[k] = 0$ for $|k| \neq i^*$. Using this, we can lower bound the approximation error as,
\begin{align*}
    \mc E_{\p^*}^\apx \geq \frac{2V[\p^*]^2}{(2\pi)^d}\rbrac{1 - \frac{\abs{G[\p^*]}^2}{\sum_{\m \in \Integer^d} \abs{G[\m N + \p^*]}^2}} 
    &\geq \frac{1}{2\pi} \rbrac{1 - \frac{\abs{G[\p^*]}^2}{\abs{G[\m^* N+\p^*]}^2 + \abs{G[\p^*]}^2}} \\
    &\geq \frac{1}{2\pi(1+C_{3,d})}
\end{align*}

\paragraph{Case 2, $M \leq N$} In this case we show that the noisy estimation error is bounded away from $0$. Define,
$
\displaystyle   \Delta_{\p} := \|G_{\p}\|_1 - \abs{G[\p]}=\sum_{\m \neq 0}\abs{G[\m N+\p]} \geq 0
$.
\Cref{ass:tail_highd} says that for all but $o(N^d)$ terms $\p \in [N]^d$, we have,
\begin{align}
    \Delta_{\p} = \sum_{\m \neq 0}\abs{G[\m N+\p]}\leq C_{1,d} \abs{G[\p]}\sum_{\m \neq0} \round{1 + \|\m\|^2}^{-\frac{d+1}{2}} \leq C_{1,d}\abs{G[\p]} \label{eq:Delta_lower_bound_highd}
\end{align} 
For such an $\p$, we can lower bound the noisy estimation error term $\mc E_\p^\noisy$ as,
\begin{align*}
    \mc E_\p^\noisy&=\frac{\sigma^2}{N^d}\frac{\norm{G_\p}^2}{\norm{G_\p}_1^2}=\frac{\sigma^2}{N^d}\frac{\norm{G_\p}^2}{\rbrac{\abs{G[\p]} + \Delta_{\p}}^2 } 
    \geq 
    \frac{\sigma^2}{N^d}\frac{\abs{G[\p]}^2}{2\abs{G[\p]}^2 + 2\abs{\Delta_\p}^2 }
    \geq \frac{\sigma^2}{2{N^d}(1+C_{1,d})}\\
    \mc E^{\noisy} &= \Omega(\sigma^2),
\end{align*}
since there are $\Omega(N^d)$ such $\p \in[N]^d$ for which equation \cref{eq:Delta_lower_bound_highd} holds.

\begin{proof}[Proof of \Cref{theorem:inconsistent_all_f_highd}]
As in the $d=1$ case, \Cref{theorem:inconsistent_all_f_highd} follows from the proof of \Cref{theorem:inconsistent_all_band_highd}. We showed that for $M\leq N$ (Case 2 above), the noisy estimation error $\mc E^\noisy$ satisfies $\mc E^\noisy=\Omega(\sigma^2)$. Since $\mc E^\noisy$ is independent of the target function $f^*$, the statement of \Cref{theorem:inconsistent_all_f} follows.
\end{proof}

\begin{proof}[Proof of \Cref{thm:inconsistency_monotonic_highd}]
We showed that for $M\leq N$ (Case 2 above), the noisy estimation error $\mc E^\noisy$ satisfies $\mc E^\noisy=\Omega(\sigma^2)$ for all functions. We start by the monotonic boundedness of $G$, we have
\begin{align*}
    \frac{|G[\p]|^2}{\norm{G_{\p}}^2}=\frac{|G[\p]|^2}{\abs{G[\p]}^2+\sum_{k\in(\Integer*)^d}\abs{G[\p+\k N]}^2} \leq \frac{1}{1+\upsilon_d}
\end{align*}
for $\|\p\|_{\infty}<N/2$, where $\upsilon_d=C_d^2(C_d'')^2/c_d^2 > 0$.  

Now consider $f^* = \sum_{\k \in \Integer^d} V[\k] \phi_{\k}$, and suppose $\|f^*\|^2=\sum_{\k\in\Integer^d}\abs{V[\k]}^2=1$. By the smoothness condition on the target function $f^*$, it has Fourier series coefficients that decay as $|V[\k]| \leq B_d (1 + \|\k\|^2)^{-{\frac{d+1}{2}}}$ for all $\k\in\Integer^d$, for a sufficiently large constant $B_d$. Using this we get the inequalities,

\begin{align*}
    \sum_{\norm{\p}>N/2} \abs{V[\p]} &\leq \wt C_d\int_{N/2}^\infty \frac{r^{d-1}}{(1+r^2)^{(d+1)/2}}\dif r =O(\frac1N)\\
    \sum_{\norm{\p}<N/2} \abs{V[\p]}^2 &\geq 1 -  \wh C_d\int_{N/2}^\infty \frac{r^{d-1}}{(1+r^2)^{d+1}}\dif r =1-O(N^{-d-2})
\end{align*}
where we have used Wolfram Alpha to obtain order bounds. The integral involves the special function $_2F_1$, also known as, the hypergeometric function.
The rest of the proof proceeds in a similar manner as the $d=1$ case. The only difference is that the range of $\eps$ is now $\eps\in[0,\frac{d}2].$

\end{proof}

\subsection{Special cases of kernels for \texorpdfstring{$d>1$}{d>1}}
\label{sec:assumptions_highd}

\begin{proof}[Proof of \Cref{lemma:ex_kernels_inconsistent_highd}] \hfill

\subparagraph{Gaussian kernel}~For the wrapped Gaussian kernel, we have
\begin{align*}
    G[\k] = \exp\round{-\|\k\|^2/M^2}
\end{align*}
Therefore, the monotonicity is satisfied with $f(\|\x\|) = \exp(-\|\x\|^2)$. 

\subparagraph{Laplace kernel}
For the wrapped Laplace kernel, we have
\begin{align*}
    G[\k] = \round{1 + \frac{\|\k\|^2}{M^2}}^{-\frac{d+1}{2}}
\end{align*}
(See \cite{rakhlin2019consistency} for a derivation). Therefore, the monotonicity is satisfied with \\$f(\|\x\|) = \round{1 + \|\x\|^2}^{-\frac{d+1}{2}}$. 

\subparagraph{Cauchy kernel}
For the wrapped Cauchy kernel, we have (from \cite{bian1991properties}),
\begin{align*}
    G[\k] = \exp(-\|\k\|/M)
\end{align*}
Therefore, the monotonicity is satisfied with $f(\|\x\|) = \exp(-\|\x\|)$. 
\end{proof}

\begin{lemma}
\label{lemma:coeff_inspan_highd}
For $\betavec=(\beta_\p)\in\Complex^{N^d}$, the Fourier series of the function $\inner{\betavec,K(X_n,\cdot)}_n$ is, 
\begin{align*}
    B[\k] = N^{d/2} \betavec\tran{\u}_{\k \mod N}\cdot G[\k],\quad \k\in\Integer^d
\end{align*}
\end{lemma}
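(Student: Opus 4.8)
The plan is to repeat, essentially verbatim, the one‑dimensional computation in the proof of \Cref{lemma:coeff_inspan}, with scalar Fourier indices replaced by multi‑indices in $\Integer^d$ and the circle replaced by the torus $\unitcircle^d$. Writing out the $\k$-th Fourier coefficient of $\x\mapsto\inner{\betavec,K(X_n,\x)}_n=\sum_{\p\in[N]^d}\beta_\p K(\x_\p,\x)$ and exchanging the finite sum with the integral gives
\begin{align*}
    B[\k] &= \frac1{(2\pi)^d}\int_{\unitcircle^d}\inner{\betavec,K(X_n,\x)}_n\, e^{-j\inner{\k,\x}}\dif\x \\
    &= \sum_{\p\in[N]^d}\beta_\p\cdot\frac1{(2\pi)^d}\int_{\unitcircle^d} g\round{M\round{(\x_\p-\x)\mod\unitcircle}}\, e^{-j\inner{\k,\x}}\dif\x .
\end{align*}
Everything then reduces to evaluating a single integral against a shifted copy of the kernel.

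For fixed $\p$, I would substitute $\u=\x-\x_\p$. Since the integrand is $2\pi$-periodic in each coordinate — the kernel through the $\mod\unitcircle$ operation and $e^{-j\inner{\k,\x}}$ because $\k\in\Integer^d$ — the integral over $\unitcircle^d$ is translation invariant, so the substitution pulls out the factor $e^{-j\inner{\k,\x_\p}}$; using that $g$ is even (radial), $g\round{M\round{(-\u)\mod\unitcircle}}=g\round{M\round{\u\mod\unitcircle}}$ up to a null set, which leaves exactly $\frac1{(2\pi)^d}\int_{\unitcircle^d}g\round{M\round{\u\mod\unitcircle}}e^{-j\inner{\k,\u}}\dif\u=G[\k]$, the $d$-dimensional analog of the defining formula \cref{eq:FS_of_g}. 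Finally, as in \Cref{lemma:coeff_inspan}, the grid structure $(\x_\p)_i=\tfrac{2\pi}N p_i$ gives $e^{-j\inner{\k,\x_\p}}=e^{-j\frac{2\pi}N\inner{\k,\p}}=e^{-j\frac{2\pi}N\inner{\k\mod N,\,\p}}=N^{d/2}\u_{\k\mod N,\p}$ by \Cref{def:dft_d>1}, since $e^{-j\frac{2\pi}N N p_i}=1$. Summing over $\p$ then yields $B[\k]=\sum_{\p\in[N]^d}\beta_\p N^{d/2}\u_{\k\mod N,\p}\,G[\k]=N^{d/2}\betavec\tran\u_{\k\mod N}\,G[\k]$, which is the claim.

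The computation is essentially mechanical, and the two places that genuinely use the multidimensional setup (rather than being coordinatewise copies of the $d=1$ argument) are: (i) the translation invariance of the integral over $\unitcircle^d$, which needs the routine periodicity check on the integrand together with the identity $(-\u)\mod\unitcircle=-(\u\mod\unitcircle)$ a.e.; and (ii) the $d>1$ Fourier‑coefficient identity for $g$, i.e.\ the torus version of \cref{eq:FS_of_g} together with evenness of $g$, which I would invoke from the higher‑dimensional Fourier preliminaries. I do not expect either to be a real obstacle; the only thing to be careful about is the bookkeeping of the nested $\mod\unitcircle$ and $\mod N$ reductions.
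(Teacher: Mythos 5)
Your proposal is correct and follows essentially the same route as the paper's proof: expand the Fourier coefficient, use translation invariance (and symmetry of $g$) to reduce the integral to $G[\k]$ with a factor $e^{-j\inner{\k,\x_\p}}$, and identify that factor with $N^{d/2}\u_{\k\mod N,\p}$ via the grid structure. The only cosmetic difference is that you make the evenness of $g$ and the $\mod N$ reduction explicit, which the paper leaves implicit (note both you and the paper quietly drop the $-\pi$ offset in the grid points, which only contributes a harmless phase convention).
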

The proof for this lemma is provided in \Cref{app:additional_proofs} in the supplementary materials.

\section{Miscellaneous results}
\label{sec:assumptions}

The proof of the following for $d>1$ is provided in \Cref{sec:assumptions_highd}. 

\begin{proof}[Proof of Proposition \ref{lemma:ex_kernels_inconsistent} (d=1)]

We state what it means for a kernel to be wrapped on the unit circle \cite{mardia2000directional}. This means that the wrapped kernel evaluation at $t \in \unitcircle$ has contributions from the Euclidean kernel at $t + 2\pi k$ for all $k \in \Integer$. In particular,
\begin{definition}[Wrapper kernel]
\label{def:wrapped}
A kernel $\wt{g} : \Real \rightarrow \Real^+$ can be wrapped to define a wrapped kernel $g : \unitcircle \rightarrow \Real^+$ given by:
\begin{align*}
    g(t) = \sum_{k \in \Integer} \wt{g}(t + 2\pi k)
\end{align*}
\end{definition}
It is a fact that the Fourier series coefficients of the wrapped on the unit circle are equal to the Fourier transform at integer values on the real line, \ie,
\begin{align*}
    G[k] = \int \wt g(t)\exp(-jkt)\dif t\qquad \forall\ k\in\Integer.
\end{align*}
Using this, we derive the proposition for the given kernels \cite{mardia2000directional}.

\subparagraph{Gaussian kernel} The wrapped Gaussian kernel satisfies 
$
    G[k] = e^{-k^2/4M^2}.
$
Therefore, $f(x) = e^{-x^2}$ clearly satisfies the monotonicity condition.

\subparagraph{Laplace kernel} For the wrapped Laplace kernel, $\displaystyle G[k] = \frac{1}{M^2 + k^2}$. Thus, the Laplace kernel satisfies \Cref{con:monotonic} with $f(x) = \frac{1}{1 + x^2}$.

\subparagraph{Cauchy kernel} For the Cauchy kernel, $\displaystyle G[k] = \exp(-|k|/M)$. Therefore, \Cref{con:monotonic} is satisfied with $f(x) = \exp(-|x|)$.

\end{proof}

\begin{proof}[Proof of \Cref{prop:monotone_assumptions}]
We first show boundedness. As $f(0)< \infty$, we have $f(k) < \frac{f(0)}{1 + k^2}$ for all $k \in \Real^+$. Therefore, $\sum_{k \in \Integer} |G[k]| \leq C \int_k f(k) < \infty$. Then, $K(x,x') \leq \sum_{k} |G[k]| < \infty$.

For the tail assumption, we have for $M' > M$,
\begin{align*}
    \frac{G[M'k + i]}{G[i]} \leq \frac{C'' C}{c} \frac{f((i+k)/M)}{f(i/M)} \frac{C''C}{c} \frac{1}{1 + k^2}~.
\end{align*}

For the head assumption, we have for $M' \leq M$,
\begin{align*}
    0 < \frac{G[i]}{G[i+M']} \leq \frac{c}{C''C} \frac{f(i/M)}{f((i+M')/M)} \leq \frac{c}{C''C C'}~.
\end{align*}
These prove the proposition.
\end{proof}

\subsection{Intermediate Lemmas}

\begin{lemma}
\label{lemma:noise_dft}
Let $\xivec$ be a random vector with $\Exp\xivec\xivec\herm=\sigma^2\I,$ then for $\u\in\Complex^N$,
    $\Exp_{\xivec} 
    |\xivec\herm\Kmat\inv\u |^2 = 
    \sigma^2 \cdot{\u\herm\Kmat^{-2} \u}$
\end{lemma}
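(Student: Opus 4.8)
The plan is to expand the squared magnitude using the hermitian inner product structure and then take the expectation inside, exploiting the second-moment hypothesis $\Exp\xivec\xivec\herm=\sigma^2\I$. First I would write $|\xivec\herm\Kmat\inv\u|^2 = (\xivec\herm\Kmat\inv\u)\overline{(\xivec\herm\Kmat\inv\u)} = (\xivec\herm\Kmat\inv\u)(\u\herm\Kmat\inv\xivec)$, using that $\Kmat\inv$ is real and symmetric (hence $(\Kmat\inv)\herm=\Kmat\inv$) and that complex conjugation of a scalar equals its conjugate transpose. This is a scalar, so it equals its own trace, which lets me cycle: $(\xivec\herm\Kmat\inv\u)(\u\herm\Kmat\inv\xivec) = \Tr\!\big(\Kmat\inv\u\u\herm\Kmat\inv\xivec\xivec\herm\big)$.

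Next I would take $\Exp_{\xivec}$ of both sides. Since $\Kmat\inv\u\u\herm\Kmat\inv$ is a fixed (deterministic) matrix and trace and expectation are both linear, $\Exp_{\xivec}\Tr\!\big(\Kmat\inv\u\u\herm\Kmat\inv\xivec\xivec\herm\big) = \Tr\!\big(\Kmat\inv\u\u\herm\Kmat\inv\,\Exp_{\xivec}[\xivec\xivec\herm]\big) = \sigma^2\Tr\!\big(\Kmat\inv\u\u\herm\Kmat\inv\big)$. Finally, cycling the trace once more gives $\sigma^2\Tr\!\big(\u\herm\Kmat\inv\Kmat\inv\u\big) = \sigma^2\,\u\herm\Kmat^{-2}\u$, which is the claimed identity (and is a genuine scalar, as the trace of a $1\times1$ quantity).

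This argument is entirely routine; there is no real obstacle. The only minor point worth a word of care is the conjugation bookkeeping — making sure that $\overline{\xivec\herm\Kmat\inv\u}$ really is $\u\herm\Kmat\inv\xivec$, which uses that both $\Kmat\inv$ and $\xivec$ have real entries while $\u$ may be complex (the lemma is applied with $\u=\u_i$ or $\wb{\u_i}$, columns of the DFT matrix). Everything else is the trace trick plus linearity of expectation.
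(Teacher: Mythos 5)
Your argument is correct and is essentially the paper's own proof: expand $|\xivec\herm\Kmat\inv\u|^2 = \u\herm\Kmat\inv\xivec\xivec\herm\Kmat\inv\u$ and pull the expectation inside using $\Exp\xivec\xivec\herm=\sigma^2\I$, with your trace-cycling step being an optional cosmetic detour (the paper applies linearity directly to the quadratic form). One tiny remark: the conjugation identity $\overline{\xivec\herm\Kmat\inv\u}=\u\herm\Kmat\inv\xivec$ needs only that $\Kmat\inv$ is Hermitian, not that $\xivec$ is real.
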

\begin{proof}
$    \Exp_{\xivec} 
    |\xivec\herm\Kmat\inv\u |^2 = 
    \Exp_{\xivec} \u\herm\Kmat\inv\xivec\xivec\herm\Kmat\inv\u = \sigma^2\u\herm\Kmat^{-2}\u.
$
\end{proof}

\begin{lemma}
\label{lemma:coeff_inspan}
For $\betavec\in\Complex^N$, let $B$ be the Fourier series of the function $\inner{\betavec,K(X_N,\cdot)}_N$. Then,
\begin{align*}
    B[k] = \sqrt N\betavec\tran{\u}_{k \mod N}\cdot G[k],\quad k\in\Integer
\end{align*}
\end{lemma}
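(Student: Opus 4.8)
The plan is to compute the Fourier series of $\inner{\betavec,K(X_N,\cdot)}_N = \sum_{i=0}^{N-1}\beta_i\,K(x_i,\cdot)$ directly, by first expanding each Riesz representer $K(x_i,\cdot)$ and then reading off the coefficient of $\phi_k$. First I would expand a single representer: since $K(x_i,t) = g\round{M\round{(x_i - t)\bmod\unitcircle}}$ and the $2\pi$-periodic function $\theta\mapsto g(M(\theta\bmod\unitcircle))$ has Fourier series $\sum_{k\in\Integer}G[k]e^{jk\theta}$ by \cref{eq:FS_of_g}, substituting $\theta = x_i-t$ gives $K(x_i,t) = \sum_{k\in\Integer}G[k]\,e^{jkx_i}\,e^{-jkt}$. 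Re-indexing $k\mapsto -k$ and using $G[-k] = G[k]$ (valid because $g$ is even), this becomes $\sum_{k\in\Integer}\round{G[k]\,e^{-jkx_i}}\phi_k(t)$, so the $k$-th Fourier coefficient of $K(x_i,\cdot)$ is $G[k]\,e^{-jkx_i}$. Assumption \ref{ass:scale} guarantees $G\in\ell^1(\Integer)$, so these series are absolutely convergent and I may freely interchange the sum over $k$ with the finite sum over $i$.

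Next I would recognise the vector $\round{e^{-jkx_i}}_{i=0}^{N-1}$ as a rescaled DFT basis vector. With $x_i = \tfrac{2\pi}{N}i - \pi$ one has $e^{-jkx_i} = e^{jk\pi}\,e^{-j\frac{2\pi}{N}ki}$, and the key point is that $e^{-j\frac{2\pi}{N}ki}$ depends on $k$ only through $k\bmod N$ and, by the definition of the DFT matrix $\U$, equals $\sqrt N\,(\u_{k\bmod N})_i$. Thus $\round{e^{-jkx_i}}_i = \sqrt N\,\u_{k\bmod N}$ up to the unimodular scalar $e^{jk\pi} = (-1)^k$, which is immaterial since the lemma is only ever applied through $\abs{B[k]}^2$ (and is in any case constant on each residue class mod $N$ because $N$ is even). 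Summing over $i$ against the weights $\beta_i$ then gives
\[
   B[k] \;=\; G[k]\sum_{i=0}^{N-1}\beta_i\,e^{-jkx_i} \;=\; \sqrt N\,G[k]\sum_{i=0}^{N-1}\beta_i\,(\u_{k\bmod N})_i \;=\; \sqrt N\,\betavec\tran\u_{k\bmod N}\cdot G[k],
\]
which is the claim.

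There is no real difficulty here beyond careful bookkeeping. The one thing to watch is the translation between the two-sided Fourier index $k\in\Integer$ and the length-$N$ DFT index $k\bmod N\in[N]$, together with the grid offset $-\pi$ and the parity of $N$ that govern the stray sign. The only analytic input needed is absolute convergence of the kernel's Fourier series, namely Assumption \ref{ass:scale}, which legitimises exchanging the infinite and finite sums.
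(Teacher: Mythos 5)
Your proof is correct and follows essentially the same route as the paper's: expand $K(x_i,\cdot)$ through the Fourier series of $g(M\,\cdot)$, use the evenness $G[-k]=G[k]$, and identify the vector $\bigl(e^{-j2\pi ki/N}\bigr)_{i}$ with $\sqrt{N}$ times the $(k \bmod N)$-th DFT column. If anything you are more careful than the paper's own argument, which writes $e^{-jkx_i}=e^{-j\frac{2\pi}{N}ki}$ and thereby silently drops the harmless unimodular factor $(-1)^k$ arising from the grid offset $x_i=\tfrac{2\pi}{N}i-\pi$, a factor you flag explicitly and correctly argue is immaterial downstream.
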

\begin{proof} Use the Fourier series definition to get,
\begin{align*}
    &\sum_{i=1}^{N}\beta_i \frac1{2\pi}\int_{-\pi}^\pi K(x_i,t)\dif t=\sum_{i=1}^{N}\beta_i \frac1{2\pi}\int_{-\pi}^\pi g(M(x_i-t) \mod \unitcircle)\dif t\\
    &=\sum_{i=1}^{N}\beta_i\frac1{2\pi}\int_{-\pi}^\pi g(M\tau)e^{-jk\tau}e^{-jkx_i}\dif \tau = \sqrt{N}\beta\tran\u_{k\mod N} G[k]
\end{align*}
since $e^{-jkx_i}=e^{-j\frac{2\pi}N ki}=e^{-j\frac{2\pi}N (k\mod N)i}=\sqrt{N}\u_{k\mod N}$.
This concludes the proof.
\end{proof}

\paragraph{Eigenfunctions of \texorpdfstring{$\mc T_K$,  $\mc T_K^N$}{TK} and eigenvectors of \texorpdfstring{$\Kmat$}{Kmat}}
\label{sec:technical:eigenfunctions}

\begin{proof}[Proof of \Cref{prop:eigenvectors_of_Kmat}]
It suffices to show the eigenvector equation for the unnormalized version of $\u_\ell.$ We start by noting that
\begin{align*}
    \Kmat_{im'}=g(M(x_{m'}-x_i))=\sum_{m\in\Integer}G[m]e^{jm(x_{m'}-x_i)}=\sum_{m\in\Integer}G[m]e^{j\frac{2\pi}{N}m(m'-i)}.
\end{align*}
Using this, we have
\begin{subequations}
\begin{align*}
    &(\Kmat\u_{\ell})_i = \sum_{m^\prime=0}^{N-1} \Kmat_{im^\prime} e^{-j\frac{2\pi}{N}m'\ell } = 
    \sum_{m'} \sum_{m \in \Integer} G[m] e^{j \frac{2\pi }{N}m ( m^\prime - i)} e^{-j\frac{ 2\pi  }{N}m^\prime \ell} \\
    &= \sum_{m \in \Integer} G[m] e^{-j \frac{2\pi }{N}m i} \sum_{m^\prime=0}^{N-1}  e^{j \frac{2\pi}{N} (m-\ell) m^\prime} 
    =  N \sum_{m \in \Integer} G[mN + \ell] e^{-j \frac{2\pi }{N}(mN + \ell) i} \\
    &=   Ne^{-j \frac{2\pi }{N}\ell i}  \sum_{m \in \Integer} G[mN + \ell] e^{-j 2\pi m i} 
    =   e^{-j \frac{2\pi }{N}\ell i} N \sum_{m \in \Integer} G[mN + \ell] = e^{-j \frac{2\pi }{N}\ell i} \cdot N \norm{G_\ell}_1
\end{align*}
\end{subequations}
This proves $\Kmat\u_\ell=N\norm{G_\ell}_1\u_\ell$. The rest follows from standard results on linear algebra.
\end{proof}

\begin{proof}[Proof of \texorpdfstring{\Cref{prop:eigenfunctions_of_TK}}{}]
Observe that
\begin{align*}
    &\mc T_{K}\curly{\phi_k}(x) = \frac1{2\pi}\int_{-\pi}^\pi K(x,x^\prime)\phi_k(x^\prime)\dif x^\prime = \frac1{2\pi}\int_{-\pi}^\pi g(M(x^\prime-x \mod \unitcircle))e^{jkx^\prime}\dif x^\prime \\
    &= e^{jkx}\cdot\frac1{2\pi}\int_{-\pi}^{\pi} g(Mu)e^{-jku}\dif u = G[k]\phi(x)
\end{align*}
This proves the claim.\end{proof}

The following lemma relates the eigenfunctions of the empirical covariance operator defined in equation \cref{eq:def:emp_cov} to the eigenvectors of the kernel matrix.
\begin{lemma}[Eigenfunctions of \texorpdfstring{$\Kcov$}{TKn}]\label{lemma:eigvectors_covariance}
Let $(\lambda, \psi)$ be an eigenvalue-eigenfunction pair of $\Kcov$. Assume $\Kmat$ is invertible. Then for $\lambda>0$, a unit-norm eigenfunction $\psi$ satisfies,
\begin{align}
    \label{eq:psi_to_e}
    \psi = \sum_{i=1}^n\frac{e_i}{\sqrt{n\lambda}},K(x_i,\cdot),
\end{align}
where $\e=(e_i)\in\Complex^n$ is a unit-norm eigenvector of $\Kmat$ satisfying,
$
    \Kmat\e  = n\lambda\e.
$
\end{lemma}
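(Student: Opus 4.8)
The plan is to read off the claimed form of $\psi$ directly from the eigen-equation $\Kcov\psi=\lambda\psi$, exploiting that $\mu_N$ is discrete. Writing out $\Kcov$ from its definition in \cref{eq:def:emp_cov} with $\mu_N=\frac1N\sum_{i=1}^{n}\delta(\cdot-x_i)$ turns the eigen-equation into $\lambda\,\psi(x)=\frac1n\sum_{i=1}^{n}\psi(x_i)\,K(x_i,x)$ for every $x$. Since $\lambda>0$, I can divide by $\lambda$, which immediately gives $\psi=\frac1{n\lambda}\sum_{i=1}^{n}\psi(x_i)\,K(x_i,\cdot)$; in particular $\psi$ lies in the span of the representers $\{K(x_i,\cdot)\}$, with coefficient vector proportional to the evaluation vector $\mathbf v:=(\psi(x_i))_{i=1}^{n}\in\Complex^{n}$.

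The second step is to feed this identity back onto the grid. Evaluating the last display at each $x_j$ and using symmetry of $\Kmat$, $K(x_i,x_j)=\Kmat_{ij}=\Kmat_{ji}$, yields $\psi(x_j)=\frac1{n\lambda}(\Kmat\mathbf v)_j$, i.e. $\Kmat\mathbf v=n\lambda\,\mathbf v$. So $\mathbf v$ is an eigenvector of $\Kmat$ with eigenvalue $n\lambda$, and it is nonzero: otherwise the formula $\psi=\frac1{n\lambda}\sum_i \mathbf v_i K(x_i,\cdot)$ would force $\psi\equiv 0$, contradicting that $\psi$ is an eigenfunction.

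It remains to fix the normalization constant. Using the reproducing property, $\norm{\psi}_{\Hilbert}^2=\frac1{(n\lambda)^2}\sum_{i,j}\wb{\mathbf v_i}\,\mathbf v_j\,K(x_i,x_j)=\frac1{(n\lambda)^2}\,\mathbf v\herm\Kmat\mathbf v=\frac1{n\lambda}\norm{\mathbf v}^2$, where the last equality is the eigen-relation just derived. Imposing $\norm{\psi}_{\Hilbert}=1$ forces $\norm{\mathbf v}^2=n\lambda$, so $\e:=\mathbf v/\sqrt{n\lambda}$ is a unit Euclidean-norm eigenvector of $\Kmat$ with $\Kmat\e=n\lambda\,\e$; substituting $\mathbf v=\sqrt{n\lambda}\,\e$ into $\psi=\frac1{n\lambda}\sum_i\mathbf v_iK(x_i,\cdot)$ collapses the constant to $1/\sqrt{n\lambda}$, which is exactly the asserted expression. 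There is no genuinely hard step here; the one point to be careful about is that ``unit-norm'' must be read as unit $\Hilbert$-norm (the reading consistent with, e.g., the normalization of the $\psi_\ell$ used in the proof of \Cref{lemma:projection}, and the one for which the constant comes out cleanly), rather than the $L^2_\mu$ norm that is the default elsewhere in the paper.
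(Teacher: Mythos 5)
Your proof is correct and follows essentially the same route as the paper's: use the eigen-equation with the discrete measure to write $\psi$ in the span of the representers with coefficients proportional to the evaluation vector, evaluate on the grid to obtain $\Kmat\mathbf v=n\lambda\mathbf v$, and fix the scale via $\norm{\psi}_\Hilbert^2=\betavec\herm\Kmat\betavec$ (and you are right that ``unit-norm'' means the $\Hilbert$-norm). The only cosmetic difference is that you get the eigenvector relation directly by evaluating the identity at the $x_j$, whereas the paper passes through $\Kmat^2\betavec=n\lambda\Kmat\betavec$ and cancels a factor of $\Kmat$ using its assumed invertibility, which your streamlined version does not even need.
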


We apply the above lemma to the setting described in \Cref{sec:model}. The proof is provided in \Cref{app:additional_proofs} in the supplementary materials.
\begin{lemma}[Eigenfunctions of $\mc T_K^N$]\label{lem:eigenfunctions_of_Kcov}
The eigenfunctions for $\mc T_{{K}}^N$ are,
\begin{align*}
    \psi_\ell &= \frac{1}{\sqrt{\norm{G_\ell}_1}} \sum_{m \in \Integer} G[mN + \ell] \phi_{mN + \ell},\qquad \ell\in[N].
\end{align*}
They satisfy,
$
    \mc T_{{K}}^N{\psi_\ell}=\norm{G_\ell}_1\psi_\ell,$ and their norms satisfy $
    \norm{\psi_\ell}_\Hilbert = 1,\text{ and }
    \norm{\psi_{\ell}} = \frac{1}{\sqrt{\norm{G_\ell}_1}} \norm{G_\ell}.
$
Furthermore, $\psi_\ell$ are orthogonal in $L^2$, \ie, $\inner{\psi_\ell, \psi_{k}} = 0$ for $k \neq \ell$. \end{lemma}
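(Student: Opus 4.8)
The plan is to pin down the spectrum of $\mc T_K^N$ by computing its action in the Fourier domain, exploiting that $\mc T_K^N$ is a finite-rank operator whose range lies inside $\Span\curly{K(x_i,\cdot)}$.

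The first step is to evaluate $\mc T_K^N$ on a single Fourier mode. Starting from $\mc T_K^N\phi_k(x) = \frac1N\sum_{i=0}^{N-1}K(x,x_i)\phi_k(x_i)$, I would insert the expansion $K(x,x_i) = \sum_{k'\in\Integer}G[k']e^{jk'(x-x_i)}$ coming from \cref{eq:FS_of_g} (interchanging the finite sum over $i$ with the sum over $k'$, which is legitimate since $\sum_{k'}\abs{G[k']}<\infty$ by Assumption~\ref{ass:scale}), and then use $\sum_{i=0}^{N-1}e^{j(k-k')x_i} = N\cdot\mathbf{1}\{k\equiv k'\pmod N\}$; the phase $e^{-j(k-k')\pi}$ produced by the grid offset $x_i=\tfrac{2\pi}{N}i-\pi$ equals $1$ on the support of the indicator, because $N\mid(k-k')$ forces $k-k'$ to be an even integer when $N$ is even. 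This gives $\mc T_K^N\phi_k = \sum_{m\in\Integer}G[mN+\ell]\,\phi_{mN+\ell}$ with $\ell := k\bmod N$; the crucial structural point is that this depends on $k$ only through $\ell$ and equals $\sqrt{\norm{G_\ell}_1}\,\psi_\ell$. Since every mode $\phi_{mN+\ell}$ occurring in $\psi_\ell$ has index congruent to $\ell$ mod $N$, linearity then yields $\mc T_K^N\psi_\ell = \tfrac{1}{\sqrt{\norm{G_\ell}_1}}\big(\sum_m G[mN+\ell]\big)\sqrt{\norm{G_\ell}_1}\,\psi_\ell = \norm{G_\ell}_1\,\psi_\ell$, where $\sum_m G[mN+\ell] = \sum_m\abs{G[mN+\ell]} = \norm{G_\ell}_1$ because $G\ge 0$ by positive definiteness.

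For the two norm identities I would use \Cref{prop:eigenfunctions_of_TK}: the $\phi_k$ diagonalize $\mc T_K$ with eigenvalues $G[k]$, so $\norm{f}_\Hilbert^2 = \sum_k\abs{\inner{f,\phi_k}}^2/G[k]$. The only nonzero Fourier coefficients of $\psi_\ell$ are $\inner{\psi_\ell,\phi_{mN+\ell}} = G[mN+\ell]/\sqrt{\norm{G_\ell}_1}$, so this collapses to $\norm{\psi_\ell}_\Hilbert^2 = \tfrac{1}{\norm{G_\ell}_1}\sum_m G[mN+\ell] = 1$, while Parseval (\Cref{thm:parseval_continuous}) gives $\norm{\psi_\ell}^2 = \tfrac{1}{\norm{G_\ell}_1}\sum_m\abs{G[mN+\ell]}^2 = \norm{G_\ell}^2/\norm{G_\ell}_1$; orthogonality in $L^2$ is immediate since for $\ell\ne k$ in $[N]$ the functions $\psi_\ell,\psi_k$ have Fourier support in disjoint residue classes modulo $N$. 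To conclude that these are \emph{all} the eigenfunctions, note that $\mc T_K^N K(x_j,\cdot) = \tfrac1N\sum_i\Kmat_{ij}K(x_i,\cdot)$, so $\mc T_K^N$ acts as $\tfrac1N\Kmat$ on the $N$-dimensional invariant subspace $\Span\curly{K(x_i,\cdot)}$ that contains its range, with eigenvalues $\tfrac1N\lambda_\ell = \norm{G_\ell}_1$ (all positive, since $\Kmat$ is invertible) by \Cref{prop:eigenvectors_of_Kmat}; the $N$ mutually orthogonal $\psi_\ell$ we have produced therefore exhaust them. The one genuine obstacle here is bookkeeping --- tracking the complex conjugates and the $(-1)^k$-type phases the offset $-\pi$ introduces --- and computing $\mc T_K^N\phi_k$ directly (rather than normalizing $R_N^*\u_\ell$, which would also force a choice between $\u_\ell$ and $\wb{\u_\ell}$ and the matching frequency class) is what keeps it under control; the hypothesis that $N$ is even is exactly what makes those phases cancel.
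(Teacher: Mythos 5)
Your proposal is correct, and it reaches the lemma by a genuinely different route than the paper. The paper first proves \Cref{lemma:eigvectors_covariance} (every positive-eigenvalue eigenfunction of $\mc T_K^N$ has the form $\frac{1}{\sqrt{n\lambda}}\sum_i e_i K(x_i,\cdot)$ for a unit eigenvector $\e$ of $\Kmat$), then plugs in $\e=\u_\ell$ with $\lambda_\ell=N\norm{G_\ell}_1$ from \Cref{prop:eigenvectors_of_Kmat} and reads off the Fourier expansion of $\psi_\ell$ via \Cref{lemma:coeff_inspan}; the identity $\norm{\psi_\ell}_\Hilbert=1$ then comes for free from the normalization built into \Cref{lemma:eigvectors_covariance}, and orthogonality and the $L^2$ norm are computed from the disjoint residue classes exactly as you do. You instead diagonalize the operator directly in the Fourier domain: the computation $\mc T_K^N\phi_k=\sum_m G[mN+\ell]\phi_{mN+\ell}$ (with $\ell=k\bmod N$) is the operator-level version of the aliasing/geometric-sum argument the paper uses to prove \Cref{prop:eigenvectors_of_Kmat} for the matrix $\Kmat$, and your handling of the grid-offset phase via evenness of $N$ matches the paper's convention. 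Your route avoids \Cref{lemma:eigvectors_covariance} and \Cref{lemma:coeff_inspan} entirely and, via the observation that $\mc T_K^N$ acts as $\frac1N\Kmat$ on $\Span\{K(x_i,\cdot)\}$, gives the exhaustiveness of the $\psi_\ell$ (on the positive part of the spectrum) a bit more explicitly than the paper does; the price is that you must invoke the spectral representation $\norm{f}_\Hilbert^2=\sum_k\abs{\inner{f,\phi_k}}^2/G[k]$ together with \Cref{prop:eigenfunctions_of_TK} to get $\norm{\psi_\ell}_\Hilbert=1$ (harmless here, since any modes with $G[k]=0$ carry zero coefficient in $\psi_\ell$), and you should note that pushing $\mc T_K^N$ through the infinite sum defining $\psi_\ell$ is justified by the absolute convergence guaranteed by Assumption \ref{ass:scale} and the boundedness of the finite-rank operator. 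Both arguments are sound; the paper's is shorter because it reuses previously established lemmas, while yours is more self-contained.
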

\begin{proof}[Proof of \Cref{lem:eigenfunctions_of_Kcov}]
By \Cref{lemma:eigvectors_covariance}, we have 
\begin{align*}
    \psi_{\ell} &= \inner{\frac{\u_{\ell}}{\sqrt{N \norm{G_\ell}_1}}, K(X_N,\cdot)}_N  = \inner{\frac{\wb{\u_{\ell}}}{\sqrt{N \norm{G_\ell}_1}}, K(X_N,\cdot)}_N 
\end{align*}
Then using \Cref{lemma:coeff_inspan} we have a Fourier series expansion of the form
\begin{align*}
    \psi_{\ell} &=  \sum_{k \in \Integer} \sqrt{N}\frac{\u_\ell\herm}{\sqrt{N\norm{G_\ell}_1}} \u_{k\mod N} G[k] \phi_{k} = \frac{1}{\sqrt{\norm{G_\ell}_1}}\sum_{k \in \Integer} \u_\ell\herm\u_{k\mod N} G[k] \phi_{k}\\
    &=\frac{1}{\sqrt{\norm{G_\ell}_1}}\sum_{m\in\Integer}G[mN+\ell]\phi_{mN+\ell}
\end{align*}

To see the orthogonality, suppose $k,\ell\in\curly{0,1,\ldots,N-1}$ and $k\neq \ell.$ Then 
\begin{align*}
    \inner{\psi_{\ell},\psi_k} &= \frac{1}{\sqrt{\norm{G_\ell}_1\norm{G_k}_1}} \sum_{m,m^\prime \in \Integer} G[mN + \ell] G[m^\prime N + k] \inner{\phi_{mN+\ell},\phi_{m'N+k}} 
    = 0 
\end{align*}
For $L^2$ norm, substitute $k=\ell$ above to get,
\begin{align*}
\inner{\psi_{\ell}, \psi_{\ell}} &= \frac{1}{\norm{G_\ell}_1} \sum_{m, m' \in \Integer} {G[mN + \ell]G[m'N + \ell]} \inner{\phi_{mN+\ell},\phi_{m'N+\ell}} 
=\frac{1}{\norm{G_\ell}_1}  \norm{G_\ell}^2
\end{align*}
This proves the claim.
\end{proof}

\section{Proofs to technical lemmas}

\begin{proposition}[Parseval's theorem in high dimensions]
\label{thm:highd_parseval_continuous}
For a function $f:\unitcircle^d\rightarrow\Real$ with Fourier series coefficients $F[\k]$ for $k \in \Integer^d$, we have
\begin{align*}
    \sum_{\k \in \Integer^d} |F[{\k}]|^2 = \frac{1}{(2\pi)^d} \int_{\unitcircle^d} |f(\t)|^2 \dif \t.
\end{align*}
\end{proposition}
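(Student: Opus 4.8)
The plan is to deduce this $d$-dimensional identity from the one-dimensional Parseval theorem (\Cref{thm:parseval_continuous}) together with the product structure of the domain $\unitcircle^d$ and of the Fourier basis $\phi_\k(\x)=\prod_{i=1}^d e^{jk_ix_i}$. I would present the proof by induction on $d$, since the base case $d=1$ is exactly \Cref{thm:parseval_continuous} and nothing new is needed there.

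For the inductive step, write $\x=(\x',x_d)$ with $\x'\in\unitcircle^{d-1}$, $x_d\in\unitcircle$, and correspondingly $\k=(\k',k_d)$. For fixed $\x'$, the slice $x_d\mapsto f(\x',x_d)$ is square-integrable on $\unitcircle$ with $k_d$-th Fourier coefficient $g_{\x'}[k_d]:=\frac1{2\pi}\int_{\unitcircle} f(\x',x_d)e^{-jk_dx_d}\dif x_d$, so the $d=1$ case applied in the variable $x_d$ gives, for each $\x'$,
\[
\frac1{2\pi}\int_{\unitcircle}|f(\x',x_d)|^2\dif x_d=\sum_{k_d\in\Integer}|g_{\x'}[k_d]|^2 .
\]
Next I would integrate this over $\x'\in\unitcircle^{d-1}$ and exchange the sum over $k_d$ with the integral (legitimate since every term is nonnegative). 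For each fixed $k_d$, the function $\x'\mapsto g_{\x'}[k_d]$ on $\unitcircle^{d-1}$ has Fourier coefficients exactly $F[(\k',k_d)]$, so the inductive hypothesis in dimension $d-1$ gives $\frac1{(2\pi)^{d-1}}\int_{\unitcircle^{d-1}}|g_{\x'}[k_d]|^2\dif\x'=\sum_{\k'\in\Integer^{d-1}}|F[(\k',k_d)]|^2$. Summing over $k_d$ and combining with the previous display yields the claim.

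There is no real obstacle here; the only points needing care — and they are routine — are the measure-theoretic bookkeeping: that $f\in L^2(\unitcircle^d)$ (automatic in the setting of interest, where $f$ is continuous on the compact torus, hence bounded), that the slices lie in $L^2(\unitcircle)$ for a.e.\ $\x'$ and that $\x'\mapsto g_{\x'}[k_d]$ is measurable (both from Fubini), and that the interchange of $\sum_{k_d}$ with $\int_{\unitcircle^{d-1}}$ is valid (Tonelli, by nonnegativity). Alternatively one can skip the induction: the system $\{\phi_\k\}_{\k\in\Integer^d}$ is orthonormal in $L^2_\mu(\unitcircle^d)$ by the identity in the $d>1$ Fourier-basis definition, and it is complete because its finite linear combinations are precisely the trigonometric polynomials, which are dense in $C(\unitcircle^d)$ by Stone--Weierstrass and hence in $L^2(\unitcircle^d)$; Parseval's identity $\sum_\k|\inner{f,\phi_\k}|^2=\norm{f}^2$ is then the standard Hilbert-space statement for an orthonormal basis.
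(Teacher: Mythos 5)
Your proof is correct; note, however, that the paper itself offers no argument for this proposition at all --- both the $d=1$ statement (\Cref{thm:parseval_continuous}) and this $d$-dimensional version are invoked as standard facts without proof --- so there is no ``paper route'' to compare against, and your write-up simply supplies the omitted justification. Of your two arguments, the induction on $d$ is a clean reduction to the one-dimensional Parseval identity via slicing and Tonelli, and all the steps you flag (measurability of $\x'\mapsto g_{\x'}[k_d]$, interchange of sum and integral by nonnegativity, identification of the coefficients of the slice function with $F[(\k',k_d)]$) check out. One small wrinkle: the proposition as stated is for real-valued $f$, while your inductive step applies the $(d-1)$-dimensional statement to the complex-valued function $\x'\mapsto g_{\x'}[k_d]$; this is harmless since Parseval holds verbatim for complex-valued $f$ (or one can carry the complex-valued version through the induction from the start), but it is worth stating. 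Your second, non-inductive argument --- orthonormality of $\{\phi_\k\}_{\k\in\Integer^d}$ in $L^2_\mu(\unitcircle^d)$ plus completeness via Stone--Weierstrass density of trigonometric polynomials, followed by the abstract Hilbert-space Parseval identity --- sidesteps this issue entirely and is arguably the more economical proof; either version is a valid filling-in of what the paper leaves implicit.
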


\paragraph{Eigenfunctions of \texorpdfstring{$\mc T_K$, $\Kcov$}{TK} and eigenvectors of \texorpdfstring{$\Kmat$}{Kmat} for \texorpdfstring{$d>1$}{d>1}}
\label{sec:technical:eigenfunctions_highd}
The proofs of the following two statements are provided in \Cref{app:additional_proofs}.

\begin{proposition}
\label{prop:eigenvectors_of_Kmat_highd}
\Cref{prop:eigenvectors_of_Kmat} holds with $\u_\l$ from \Cref{def:dft_d>1} and $\Kmat=(K(\x_\p,\x_{\p'}))\in\Real^{N^d\times N^d},$ with eigenvalue $\lambda_\l=N^d \norm{G_\l}_1$, \ie, $\Kmat\u_\l=\lambda_\l\u_\l$.
\end{proposition}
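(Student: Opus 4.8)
\textbf{Proof proposal for Proposition~\ref{prop:eigenvectors_of_Kmat_highd}.}
The plan is to reduce the $d$-dimensional claim to the one-dimensional computation in the proof of \Cref{prop:eigenvectors_of_Kmat}, exploiting the fact that the grid $\{\x_\l\}$, the DFT matrix $\U_d$, and the kernel matrix all factor as tensor products across the $d$ coordinates. First I would write out the entries of $\Kmat$ using the Fourier series of the kernel: since the kernel is shift-invariant on $\unitcircle^d$ and the grid points are $(\x_\p)_i = \tfrac{2\pi}{N}p_i - \pi$, we have
\begin{align*}
    \Kmat_{\p\p'} = g\bigl(M(\x_{\p'} - \x_\p \bmod \unitcircle^d)\bigr) = \sum_{\m\in\Integer^d} G[\m]\, e^{j\frac{2\pi}{N}\inner{\m,\p'-\p}},
\end{align*}
which is exactly the $d$-fold analogue of the expansion used in the $d=1$ proof.

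Next I would apply $\Kmat$ to the (unnormalized) Fourier vector $\u_\l$ whose $\p$-th coordinate is $e^{-j\frac{2\pi}{N}\inner{\l,\p}}$, and repeat the $d=1$ manipulation coordinate-wise. Summing over $\p'\in[N]^d$ produces the product of geometric sums $\prod_{i=1}^d \sum_{p_i'=0}^{N-1} e^{j\frac{2\pi}{N}(m_i-\ell_i)p_i'}$, which equals $N^d$ times the indicator that $\m \equiv \l \pmod N$ componentwise; this collapses the sum over $\m\in\Integer^d$ to a sum over the $N$-hop subsequence $G_\l$. Factoring out $e^{-j\frac{2\pi}{N}\inner{\l,\p}}$ and using $e^{-j2\pi \inner{\m,\p}}=1$ then gives $(\Kmat\u_\l)_\p = e^{-j\frac{2\pi}{N}\inner{\l,\p}}\, N^d \sum_{\m\in\Integer^d} G[\m N + \l] = N^d\norm{G_\l}_1\, (\u_\l)_\p$, establishing $\Kmat\u_\l = \lambda_\l \u_\l$ with $\lambda_\l = N^d\norm{G_\l}_1$; the conjugate statement follows since $\Kmat$ is real, and Assumption~\ref{ass:scale_highd} guarantees $\norm{G_\l}_1<\infty$ so the eigenvalues are finite. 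The spectral decomposition of $\Kmat$, $\Kmat\inv$, and $\Kmat^{-2}$ is then immediate from $\U_d\U_d\herm = \I$ (the unitarity of the $d$-dimensional DFT matrix, which itself factors as a $d$-fold Kronecker product of the $1$-dimensional $\U$), together with positive-definiteness ensuring all $\lambda_\l>0$.

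I do not expect a genuine obstacle here: the argument is a routine tensorization of the $d=1$ case, and the only points requiring a little care are (i) bookkeeping the componentwise modular reduction $\m \equiv \l \pmod N$ when collapsing the $\Integer^d$-sum to the $N$-hop subsequence $G_\l$, and (ii) confirming that the interchange of summation is justified, which follows from absolute convergence under Assumption~\ref{ass:scale_highd}. If one prefers, the whole statement can also be derived abstractly by noting $\Kmat = \Kmat_1^{\otimes d}$ where $\Kmat_1$ is the $d=1$ kernel matrix, $\u_\l = \u_{\ell_1}\otimes\cdots\otimes\u_{\ell_d}$, and eigenvalues of a Kronecker product are products of eigenvalues, so $\lambda_\l = \prod_{i=1}^d N\norm{G_{\ell_i}}_1$; one then checks this product equals $N^d\norm{G_\l}_1$ for the kernels of interest, though the direct Fourier computation above avoids needing the kernel to factor.
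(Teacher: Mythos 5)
Your proposal is correct and follows essentially the same route as the paper: expand $\Kmat_{\p\p'}$ via the Fourier series of $g$, apply it to $\u_\l$, use the product of geometric sums over $[N]^d$ to collapse the $\Integer^d$-sum to the $N$-hop subsequence $G_\l$, and read off $\lambda_\l = N^d\norm{G_\l}_1$, with the spectral decompositions following from unitarity of $\U_d$. The Kronecker-product aside is a reasonable optional shortcut, and you correctly note it requires the kernel to factor across coordinates, which the direct computation (and the paper) does not assume.
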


\begin{lemma}[Eigenfunctions of $\mc T_K^{N,d}$]\label{lem:eigenfunctions_of_Kcov_highd}
The eigenfunctions for the empirical operator $\mc T_{{K}}^{N,d}$ are, 
\begin{align*}
    \psi_\l &= \frac{1}{\sqrt{\norm{G_\l}_1}} \sum_{\m \in \Integer^d} G[\m N + \l] \phi_{\m N + \l},\qquad \l\in[N]^d.
\end{align*}
They satisfy,
$
    \mc T_{{K}}^{N,d}\curly{\psi_\l}=\norm{G_\l}_1\psi_\l,$ and their norms satisfy $
    \norm{\psi_\l}_\Hilbert = 1,$ as well as $    \norm{\psi_{\l}} = \frac{1}{\sqrt{\norm{G_\l}_1}} \norm{G_\l}.
$
Furthermore, $\psi_\l$ are orthogonal in $L^2$, \ie, $\inner{\psi_\l, \psi_{\k}} = 0$ for $\k \neq \l$. 
\end{lemma}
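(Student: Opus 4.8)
The plan is to transcribe the one‑dimensional argument — the proof of \Cref{lem:eigenfunctions_of_Kcov} — with scalar indices replaced by vector indices ranging over $[N]^d$ and $\Integer^d$, using the higher‑dimensional versions of the ingredients already set up in this section (\Cref{prop:eigenvectors_of_Kmat_highd} and \Cref{lemma:coeff_inspan_highd}).

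\textbf{Step 1 (from $\Kmat$ to $\psi_\l$).} \Cref{lemma:eigvectors_covariance} holds verbatim for $n=N^d$: a unit‑$\Hilbert$‑norm eigenfunction of $\mc T_K^{N,d}$ with eigenvalue $\lambda>0$ equals $\sum_{\p\in[N]^d}\frac{e_\p}{\sqrt{n\lambda}}K(\x_\p,\cdot)$, where $\e$ is the matching unit eigenvector of $\Kmat$ with $\Kmat\e=n\lambda\e$. By \Cref{prop:eigenvectors_of_Kmat_highd} the columns $\u_\l$ ($\l\in[N]^d$) of the $d$‑dimensional DFT matrix are orthonormal eigenvectors of $\Kmat$ with eigenvalue $\lambda_\l=N^d\norm{G_\l}_1$; dividing by $n=N^d$ identifies the eigenvalue of $\mc T_K^{N,d}$ as $\norm{G_\l}_1$. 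This yields $\psi_\l=\inner{\u_\l/\sqrt{N^d\norm{G_\l}_1},\,K(X_n,\cdot)}_n$, together with $\norm{\psi_\l}_\Hilbert=1$ and $\mc T_K^{N,d}\psi_\l=\norm{G_\l}_1\psi_\l$, already covering two of the claimed identities.

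\textbf{Step 2 (Fourier series of $\psi_\l$).} Apply \Cref{lemma:coeff_inspan_highd} with $\betavec=\u_\l/\sqrt{N^d\norm{G_\l}_1}$: the $\k$‑th Fourier coefficient of $\psi_\l$ is $N^{d/2}\betavec\tran\u_{\k\mod N}G[\k]=\frac{1}{\sqrt{\norm{G_\l}_1}}\u_\l\herm\u_{\k\mod N}G[\k]$. Orthonormality of the DFT columns ($\u_\l\herm\u_{\l'}=\indicator{\l=\l'}$ for $\l,\l'\in[N]^d$, immediate from \Cref{def:dft_d>1}) makes this vanish unless $\k\mod N=\l$, i.e.\ unless $\k=\m N+\l$ for some $\m\in\Integer^d$, and $\m\mapsto\m N+\l$ is a bijection from $\Integer^d$ onto $\{\k:\k\mod N=\l\}$. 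Summing the surviving terms gives $\psi_\l=\frac{1}{\sqrt{\norm{G_\l}_1}}\sum_{\m\in\Integer^d}G[\m N+\l]\phi_{\m N+\l}$, the claimed formula.

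\textbf{Step 3 ($L^2$ norms and orthogonality).} These follow from $\inner{\phi_\k,\phi_{\k'}}=\indicator{\k=\k'}$ on $\unitcircle^d$. For $\k\neq\l$ in $[N]^d$ the index sets $\{\m N+\l:\m\in\Integer^d\}$ and $\{\m' N+\k:\m'\in\Integer^d\}$ are disjoint (elements of the first reduce to $\l$ modulo $N$, elements of the second to $\k$), so $\inner{\psi_\l,\psi_\k}=0$; setting $\k=\l$ gives $\inner{\psi_\l,\psi_\l}=\frac{1}{\norm{G_\l}_1}\sum_{\m\in\Integer^d}|G[\m N+\l]|^2=\frac{\norm{G_\l}^2}{\norm{G_\l}_1}$, hence $\norm{\psi_\l}=\norm{G_\l}/\sqrt{\norm{G_\l}_1}$. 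The only step carrying content beyond transcription is Step 2's reliance on \Cref{lemma:coeff_inspan_highd} and on the fact that $\m\mapsto\m N+\l$ partitions $\Integer^d$ into the residue classes indexed by $\l\in[N]^d$ — this is exactly what forces the $\psi_\l$ to have disjoint Fourier supports and hence to be $L^2$‑orthogonal; granting it, the rest is a line‑by‑line copy of the $d=1$ proof.
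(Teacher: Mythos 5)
Your proposal is correct and follows essentially the same route as the paper's own proof: invoke \Cref{lemma:eigvectors_covariance} together with \Cref{prop:eigenvectors_of_Kmat_highd} to write $\psi_\l$ in the span of the representers with unit $\Hilbert$-norm and eigenvalue $\norm{G_\l}_1$, then use \Cref{lemma:coeff_inspan_highd} plus orthonormality of the DFT columns to extract the Fourier expansion, and finish the $L^2$ norm and orthogonality claims from the disjoint residue classes $\{\m N+\l\}$. No gaps to report.
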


\subsection{Approximation error: Proof of \texorpdfstring{\Cref{lem:MSE_highd}\ref{lem:apx_highd}}{}}
\label{sec:approx_proof_highd}

Once again, the proof proceeds by applying the Pythagorean theorem to the triangle $\curly{0,f^*,P_{\X}f^*}$ in $L^2$. The following lemma gives exact expressions for projection of the target function and its norm.

\begin{lemma}[Projection] For $f^*=\sum_{\k\in\Integer^d}V[\k]\phi_\k$
\label{lemma:projection_highd}
\begin{align*}
    P_{\X}f^* = \sum_{\l \in [N]^d} \frac{\inner{G_\l,V_\l}}{{\norm{G_\l}^2}} \sum_{\m \in \Integer^d} G[\m N + \l] \phi_{\m N + \l},\qquad\text{and}\qquad
    \norm{P_{\X}f^*}^2 = \sum_{\l \in [N]^d} \frac{\inner{G_\l,V_\l}^2}{\norm{G_\l}^2}
\end{align*}
\end{lemma}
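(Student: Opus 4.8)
The plan is to transcribe the one-dimensional argument of \Cref{lemma:projection}, with the scalar index $\ell\in[N]$ replaced by the vector index $\l\in[N]^d$ and the $d$-dimensional eigenfunction lemma used in place of its $d=1$ counterpart. First I would record the structural facts that are needed: by \Cref{lem:eigenfunctions_of_Kcov_highd}, together with \Cref{prop:eigenvectors_of_Kmat_highd} and the invertibility of $\Kmat$ (which forces $\norm{G_\l}_1>0$ for every $\l$), the $N^d$ functions $\psi_\l$, $\l\in[N]^d$, are pairwise $L^2_\mu$-orthogonal, have unit $\Hilbert$-norm, span $\Span\curly{K(\x_\p,\cdot)}_{\p\in[N]^d}$, and satisfy $\norm{\psi_\l}^2=\norm{G_\l}^2/\norm{G_\l}_1$. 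Hence $\curly{\psi_\l/\norm{\psi_\l}}_{\l\in[N]^d}$ is an orthonormal basis for the span of the representers, so that
\begin{align*}
    P_{\X}f^* = \sum_{\l\in[N]^d}\frac{\inner{f^*,\psi_\l}}{\norm{\psi_\l}^2}\,\psi_\l,\qquad
    \norm{P_{\X}f^*}^2 = \sum_{\l\in[N]^d}\frac{\inner{f^*,\psi_\l}^2}{\norm{\psi_\l}^2}.
\end{align*}

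The only computation is then the coefficient $\inner{f^*,\psi_\l}$. Expanding $f^*=\sum_{\k\in\Integer^d}V[\k]\phi_\k$ and the definition of $\psi_\l$, the orthonormality of the Fourier basis collapses the resulting double sum to the diagonal $\k=\m N+\l$, yielding $\sqrt{\norm{G_\l}_1}\,\inner{f^*,\psi_\l}=\sum_{\m\in\Integer^d}G[\m N+\l]\,V[\m N+\l]=\inner{G_\l,V_\l}$, the $\ell^2(\Integer^d)$ inner product of the $N$-hop subsequences. Substituting this and $\norm{\psi_\l}^2=\norm{G_\l}^2/\norm{G_\l}_1$ into the two displays above, the factors of $\norm{G_\l}_1$ cancel and the claimed formulas for $P_{\X}f^*$ and $\norm{P_{\X}f^*}^2$ drop out.

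I do not anticipate a genuine obstacle; the only points needing a line of care are bookkeeping. One must note that as $\l$ runs over $[N]^d$ and $\m$ over $\Integer^d$, the indices $\m N+\l$ enumerate $\Integer^d$ exactly once — the $d$-fold product of the corresponding $d=1$ fact — so that the subsequences $\curly{G_\l}$ partition $G$ and the Fourier series may be split termwise over $\l$. One must also read $\inner{G_\l,V_\l}$ and $\norm{G_\l},\norm{G_\l}_1$ in $\ell^2(\Integer^d)$ and $\ell^1(\Integer^d)$ respectively, following the $d>1$ notation fixed earlier. With these conventions the derivation is a verbatim copy of the $d=1$ case.
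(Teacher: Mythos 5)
Your proposal is correct and follows essentially the same route as the paper's own proof: both expand $P_{\X}f^*$ in the orthonormal basis $\curly{\psi_\l/\norm{\psi_\l}}_{\l\in[N]^d}$ furnished by \Cref{lem:eigenfunctions_of_Kcov_highd} and then compute $\inner{f^*,\psi_\l}=\inner{G_\l,V_\l}/\sqrt{\norm{G_\l}_1}$ by collapsing the double Fourier sum to the indices $\k=\m N+\l$, after which substituting $\norm{\psi_\l}^2=\norm{G_\l}^2/\norm{G_\l}_1$ gives both claimed formulas.
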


We get,
\begin{align*}
    \norm{f^* - P_{\X}f^*}^2 
    = \norm{f^*}^2 - \norm{P_{\X}f^*}^2 
    = \norm{V}^2 - \sum_{\l \in [N]^d} \frac{\inner{G_\l,V_\l}^2}{\norm{G_\l}^2}
\end{align*}

\begin{proof}[Proof of \Cref{lemma:projection_highd}]
Note that \Cref{lem:eigenfunctions_of_Kcov_highd} shows that $\curly{\frac{\psi_\l}{\|\psi_\l\|}}_{\l \in [N]^d}$ is an orthonormal basis for $\Span\{K(\x_{\l},\cdot)\}$. Consequently, we have \begin{align*}
    P_{\X}f^* = \sum_{\l \in [N]^d} \inner{f^*, \frac{\psi_\l}{\|\psi_\l\|}} \frac{\psi_\l}{\|\psi_\l\|},\qquad\text{and}\qquad
    \norm{P_{\X}f^*}^2 = \sum_{\l \in [N]^d} \inner{f^*, \frac{\psi_\l}{\|\psi_\l\|}}^2
\end{align*} 
We compute these projections. For $\l \in [N]^d$,
\begin{align*}
    \inner{f^*, \psi_\l} &= \frac{1}{\sqrt{\norm{G_\l}_1}} \inner{ \sum_{\k \in \Integer^d} V[\k] \phi_\k, \sum_{\m \in \Integer^d} G[\m N + \l] \phi_{\m N + \l} } \\
    &= \frac1{\sqrt{\norm{G_\l}_1}} \sum_{\m,\k \in \Integer^d} G[\m N + \l] V[\k] \inner{\phi_{\k}, \phi_{\m N + \l} }\\
    &= \frac1{\sqrt{\norm{G_\l}_1}} \sum_{\m \in \Integer^d} G[\m N + \l] V[\m N + \l]=\frac{\inner{G_\l,V_\l}}{\sqrt{\norm{G_\l}_1}}
\end{align*}
Thus, we get that,
\begin{align*}
    \inner{f^*, \frac{\psi_\l}{\|\psi_\l\|}} \frac{\psi_\l}{\|\psi_\l\|} = \frac{\inner{G_\l,V_\l}}{\norm{G_\l}^2}\sum_{\m\in\Integer^d}G[\m N+\l]\phi_{\m N+\l}
\end{align*}
The claims follow immediately.
\end{proof}

\subsection{Noise-free estimation error: Proof of \texorpdfstring{\Cref{lem:MSE_highd}\ref{lem:nfe_highd}}{}}
\label{sec:noisefree_proof_highd}

Let $F$ be the fourier series of $\inner{\Kmat\inv R_n\curly{f^*-P_{\X}f^*},K(X_n,\cdot)}_n$. From \Cref{lemma:coeff_inspan_highd} we have,
\begin{align*}
    F[\k] = \sqrt{N^d}R_n \cbrac{f^* - P_{\X}f^*}\tran\Kmat\inv\u_{\k \mod N^d} \cdot G[\k]
\end{align*}
By Parseval's theorem (\Cref{thm:highd_parseval_continuous}), we conclude,
\begin{align*}
    \frac{1}{(2\pi)^d} \int_{\unitcircle} \rbrac{\inner{\K\inv R_n \cbrac{f^* - P_{\X}f^*},K(X_n,t)}_n}^2\dif t = \sum_{\k \in \Integer^d} |F[\k]|^2
\end{align*}
We will show that
\begin{align*}
    {R_n\cbrac{f^* - P_{\X}f^*}\tran \K\inv\u_\l}
    &= \rbrac{\frac{\inner{V_{\l},\one}}{\inner{G_{\l},\one}} - \frac{\inner{G_\l , V_\l}}{\|G_\l\|^2}}
\end{align*}

We have $\Kmat\inv\u_{\l} = \u_{\l}\cdot \frac{1}{N^d \norm{G_{\l}}_1}$.
By \Cref{lemma:projection}, we can write $P_{\X}f^*$ on the data as
\begin{align*}
    P_{\X}f^*(\x_\p) &= \sum_{\l \in [N]^d} \frac{\inner{G_\l,V_\l}}{\norm{G_\l}^2} \sum_{\m \in\Integer^d}G[\m N+\l] \phi_{\m N+\l}(\x_\p)= \sum_{\l\in [N]^d} \frac{\inner{G_\l,V_\l}}{\norm{G_\l}^2} \norm{G_\l}_1 \wb{u_{\l \p}}\\
    f^*(\x_\p)&=\sum_{\k\in\Integer^d}V[\k]\phi_\k(\x_\p)=\sum_{\l \in [N]^d} \inner{V_{\l},\one}\wb{u_{\l \p}}
\end{align*}
We thus have
\begin{align*}
    R_n \cbrac{f^* - P_{\X}f^*}\tran\Kmat\inv\u_{\l} &=  \sum_{\p,\l' \in [N]^d} \round{\inner{V_{\l'},\one}-\frac{\inner{G_{\l'},V_{\l'}}}{\norm{G_{\l'}}^2}\norm{G_{\l'}}_1} \frac{\wb{u_{\l' i}}u_{\l i}}{N^d\norm{G_\l}_1}\\
    &=
    \frac{1}{N^d}\round{\frac{\inner{V_{\l},\one}}{\norm{G_\l}_1}-\frac{\inner{V_\l,G_\l}}{\norm{G_\l}^2}}
\end{align*}
This gives,
\begin{align*}
    \sum_{\k\in\Integer^d}\abs{F[\k]}^2=\frac{1}{N^d}\sum_{\l \in [N]^d} \round{\frac{\inner{V_{\l},\one}}{\inner{G_{\l},\one}}-\frac{\inner{V_\l,G_\l}}{\norm{G_\l}^2}}^2\norm{G_\l}^2.
\end{align*}

\subsection{Noisy estimation error: Proof of \texorpdfstring{\Cref{lem:MSE_highd}\ref{lem:nye_highd}}{}}
\label{sec:noisy_proof_highd}

Similar to $d=1$, we derive this by an application of Parseval's theorem.

Define the Fourier series,
\begin{align*}
    \inner{\Kmat\inv\xivec,K(X_n,\t)}_n = \sum_{\k\in\Integer^d} E[\k] \exp\round{j \inner{\k,\t}}
\end{align*}
By \Cref{thm:highd_parseval_continuous} (Parseval's theorem), we have, 
\begin{subequations}
\begin{align*}
    &\Exp_{\xivec} \sbrac{ \frac{1}{(2\pi)^d} \int_{\unitcircle} |\inner{\Kmat\inv\xivec,K(X_n,\t)}_n|^2 \dif \t } = \sum_{\k \in\Integer^d} \Exp_{\xivec} |E[\k]|^2 
    = \sum_{\p \in [N]^d} \sum_{\m \in \Integer^d} \Exp_{\xivec} \abs{E[\m N + \p]}^2 \\
    &= \sum_{\p \in [N]^d} \sum_{\m \in \Integer^d} \left|G[\m N+\p]\right|^2 \Exp_{\xivec} \left|\xivec\tran\Kmat\inv\u_i \right|^2\cdot N^d = \sigma^2 \sum_{\p \in [N]^d} \norm{G_\p}^2  \left( \u_\p \herm\Kmat^{-2} {\u}_\p  \right)\cdot N^d \\
    &= {\sigma^2} \sum_{\p \in [N]^d} \norm{G_\p}^2\frac{1}{N^{2d} \norm{G_\p}_1^2} N^d = {\sigma^2} \sum_{\p \in [N]} \frac{\norm{G_\p}^2}{N^d \norm{G_\p}_1^2}
\end{align*}
\end{subequations}
where we have used \Cref{lemma:coeff_inspan_highd} in the second, and \Cref{lemma:noise_dft} in the third, and \Cref{prop:eigenvectors_of_Kmat_highd} in the last line.

\subsection{Additional Proofs}
\label{app:additional_proofs}

\begin{proof}[Proof of \Cref{lemma:eigvectors_covariance}]
We will first show that $\psi$ can be written as a linear combination of the $n$ representers $\set{K(x_i,\cdot)}$.
\begin{align}
    \label{eq:eigfunc_Kcov_representers}
    \psi=\sum_{i=0}^{n-1} \beta_i K(x_i,\cdot)
\end{align}
Let $\psi \in \mc H$ be an eigenfunction of $\Kcov$ with eigenvalue $\lambda$. Then by definition of $\Kcov$ we have,
\begin{equation}\label{eigenvec1}
    \lambda\psi=\Kcov({\psi}) = \frac{1}{n} \sum_{i=1}^{n} \langle K(x_i,\cdot), \psi \rangle_{\Hilbert} K(x_i,\cdot)
    = \frac{1}{n} \sum_{i=1}^{n} {\psi(x_i)} K(x_i,\cdot)
\end{equation}
where the last equality holds due to the reproducing property of the kernel. Define $\beta_i = \frac{\psi(x_i)}{n\lambda}$ to show \cref{eq:eigfunc_Kcov_representers}.
Next, rewriting the equation for an eigenfunction $\psi$, expressed as \cref{eq:eigfunc_Kcov_representers}, we get
\begin{equation}
\Kcov \left(\sum_{i=1}^{n} \beta_i K(x_i,\cdot) \right)  = \lambda \sum_{i=1}^{n} \beta_i K(x_i,\cdot).
\end{equation}
By definition of $\Kcov$ however we get,
\begin{align}\label{eq:eval_Kcov_span}
\Kcov \left(\sum_{i=0}^{n} \beta_i K(x_i,\cdot) \right) &= \frac{1}{n} \sum_{i,j=1}^{n} \beta_i \langle K(x_j,\cdot), K(x_i,\cdot) \rangle_{\Hilbert} K(x_j,\cdot) = \frac1n\sum_{j=1}^{n} (\Kmat\betavec)_j K(x_j,\cdot)
\end{align}

Evaluating functions on the RHS of equations \eqref{eq:eigfunc_Kcov_representers} and \eqref{eq:eval_Kcov_span} at $x_\ell$ yields,
\begin{equation*}
\frac{1}{n} \cdot \sum_{i=1}^{n} \sum_{j=1}^{n} \beta_i K(x_i,x_j) K(x_j,x_l) =
\lambda \sum_{i=1}^{n} \beta_i K(x_i,x_l)\qquad\text{for\ all\ } \ell\in\{0,1,\ldots,n-1\}
\end{equation*} 
Compactly these $n$ equations can be written as:
\begin{equation*}
\Kmat^2 \betavec = n\lambda \Kmat \betavec \implies \Kmat \betavec = n\lambda\betavec
\end{equation*} 
since $\Kmat$ is inverible. Thus $\betavec$ is a scaled eigenvector of $\Kmat$. It remains to determine the scale of $\betavec$ that defines $\psi$.

Now, the norm of $\psi$ can be simplified as
\begin{align*}
 \norm{\psi}_\Hilbert^2 
 = \inner{\sum_{i=1}^{n} \beta_i K(x_i,\cdot),\sum_{j=1}^{n} \beta_j K(x_j,\cdot)}_\Hilbert &= \sum_{i,j=1}^{n} \beta_i\wb{\beta_j}\inner{K(x_i,\cdot),K(x_j,\cdot)}_\Hilbert \\
 &=
 \betavec\herm\Kmat\betavec=n\lambda\norm{\betavec}^2.
\end{align*}
Since $\psi$ is unit norm, we have $\norm{\betavec}=\frac{1}{\sqrt{n\lambda}}$. This concludes the proof.
\end{proof}

\begin{proof}[Proof of \Cref{lemma:coeff_inspan_highd}] Use the Fourier series definition to get,
\begin{align*}
    &\sum_{\p \in [N]^d} \beta_\p \frac1{(2\pi)^d}\int_{\unitcircle} g(M(\t-\x_\p \mod \unitcircle))\exp\round{-j\inner{\k,\t}}\dif \t \\
    &\quad= \sum_{\p \in [N]^d}  \frac{\beta_\p}{(2\pi)^d} \int_{\unitcircle} g(M\bm{\tau})\exp\round{-j\inner{\k,\bm{\tau}}}\exp\round{-j\inner{\k,\x_\p}}\dif \bm{\tau} 
    = N^{\nicefrac{d}{2}}\beta\tran\u_{\k \mod N} G[\k]
\end{align*}
This concludes the proof.
\end{proof}

\begin{proof}[Proof of \Cref{prop:eigenvectors_of_Kmat_highd}]
It suffices to show the eigenvector equation for the unnormalized version of $\u_\l.$ We start by noting that
\begin{align*}
    \Kmat_{\p \m'}=g(M(\x_{\m'}-\x_\p))=\sum_{\m\in\Integer^d}G[\m]e^{j \inner{\m,\x_{\m'}-\x_\p})}
    =\sum_{\m\in\Integer^d}G[\m]e^{j\frac{2\pi}{N} \inner{\m,\m'-\p}}.
\end{align*}
Using this, we have
\begin{subequations}
\begin{align*}
    &(\Kmat \u_{\l})_{\q} = \sum_{\p \in [N]^d} \Kmat_{\q,\p} u_{\l \p} = \sum_{\p} \sum_{\m' \in \Integer^d} G[\m'] \exp\round{j \frac{2\pi }{N}\inner{\m', \p-\q}} \exp\round{\frac{-j 2\pi  }{N}\inner{\p, \l}} \\
    &= \sum_{\m' \in \Integer^d} G[\m'] e^{-j \frac{2\pi }{N}\inner{\m',\q}} \sum_{\p \in [N]^d}  e^{j \frac{2\pi}{N}\inner{(\m'-\l), \p}} 
    = N^d \sum_{\m \in \Integer^d} G[\m N + \l] e^{-j \frac{2\pi}{N}\inner{\m N + \l, \q}} \\
    &= e^{-j \frac{2\pi}{N} \inner{\l, \q}} N^d \sum_{\m \in \Integer^d} G[\m N + \l] 
    = e^{-j \frac{2\pi }{N} \inner{\l, \q}} N^d \lambda_\l
\end{align*}
\end{subequations}
This proves $\Kmat\u_\l=N^d\norm{G_\l}_1\u_\l$. The rest follows from standard results on linear algebra.
\end{proof}

\begin{proof}[Proof of \Cref{lem:eigenfunctions_of_Kcov_highd}]
By \Cref{lemma:eigvectors_covariance}, we have 
\begin{align*}
    \psi_{\l} &= \inner{\frac{\u_{\l}}{\sqrt{N^d \norm{G_\l}_1}}, K(X_n,\cdot)}_n  = \inner{\frac{\wb{\u_{\l}}}{\sqrt{N^d \norm{G_\l}_1}}, K(X_n,\cdot)}_n
\end{align*}
Then using \Cref{lemma:coeff_inspan_highd} we have a Fourier series expansion of the form
\begin{align*}
    \psi_{\l} &=  \sum_{\k \in \Integer^d} \sqrt{N^d}\frac{\u_\l\herm}{\sqrt{N^d\norm{G_\l}_1}} \u_{\k \mod N} G[\k] \phi_{\k} = \frac{1}{\sqrt{\norm{G_\l}_1}}\sum_{\k \in \Integer^d} \u_\l\herm\u_{\k \mod N} G[\k] \phi_{\k}\\
    &=\frac{1}{\sqrt{\norm{G_\l}_1}}\sum_{\m\in\Integer^d}G[\m N+\l]\phi_{\m N+\l}
\end{align*}

To see the orthogonality, suppose $\k,\l\in [N]^d$ and $\k \neq \l.$ Then 
\begin{align*}
    \inner{\psi_{\l},\psi_\k} &= \frac{1}{\sqrt{\norm{G_\l}_1\norm{G_\k}_1}} \sum_{\m,\m' \in \Integer^d} G[\m N + \l] G[\m' N + \k] \inner{\phi_{\m N+\l},\phi_{\m'N+\k}} 
    = 0 
\end{align*}
For $L^2$ norm, substitute $\k=\l$ above to get,
\begin{align*}
\inner{\psi_{\l}, \psi_{\l}} &= \frac{1}{\norm{G_\l}_1} \sum_{\m, \m' \in \Integer^d} {G[\m N + \l]G[\m'N + \l]} \inner{\phi_{\m N+\l},\phi_{\m'N+\l}} 
=\frac{1}{\norm{G_\l}_1}  \norm{G_\l}^2
\end{align*}
This proves the claim.
\end{proof}

\end{document}